%% file: main.tex
\documentclass{article}

\PassOptionsToPackage{numbers}{natbib}
\usepackage[preprint]{neurips_2026}

\usepackage[utf8]{inputenc}
\usepackage[T1]{fontenc}

\usepackage{amsmath}
\usepackage{amssymb}
\usepackage{amsfonts}
\usepackage{mathtools}
\usepackage{amsthm}
\usepackage{wrapfig}
\usepackage{caption}

\usepackage{graphicx}
\usepackage{flafter}
\usepackage{booktabs}
\usepackage{multirow}
\usepackage{subcaption}

\usepackage{tikz}
\usepackage{pgfplots}
\usepgfplotslibrary{groupplots,fillbetween}
\pgfplotsset{compat=1.18}

\usepackage{algorithm}
\usepackage{algorithmic}

\usepackage{nicefrac}
\usepackage{microtype}
\usepackage{xcolor}
\usepackage{url}

\usepackage{hyperref}
\usepackage[capitalize,noabbrev]{cleveref}
\usepackage{amsthm}

\theoremstyle{plain}
\newtheorem{theorem}{Theorem}[section]
\newtheorem{lemma}[theorem]{Lemma}
\newtheorem{corollary}[theorem]{Corollary}

\theoremstyle{definition}
\newtheorem{definition}[theorem]{Definition}

\theoremstyle{remark}
\newtheorem{remark}[theorem]{Remark}
\usepackage{amsthm}

\title{Learning Domain Trajectories with Flow Matching for Gradual Domain Adaptation}

\author{%
  \textbf{Yubo Huang}$^{1,}$\thanks{Yubo Huang and Yushe Cao contributed equally to this work.}\quad
  \textbf{Yushe Cao}$^{2,}$\footnotemark[1]\quad
  \textbf{Zixi Wang}$^{3}$\\
  \textbf{Jingzehua Xu}$^{4,}$\thanks{Co-corresponding author: Jingzehua Xu (\href{mailto:u3014074@connect.hku.hk}{\nolinkurl{u3014074@connect.hku.hk}}).}\quad
  \textbf{Xin Lai}$^{1}$\quad
  \textbf{Xiongyu Hu}$^{1,}$\thanks{Primary corresponding author: Xiongyu Hu (\href{mailto:huxiongyu@swjtu.edu.cn}{\nolinkurl{huxiongyu@swjtu.edu.cn}}).}\quad
  \textbf{Shuai Zhang}$^{5}$\\[4pt]
  \parbox{0.96\textwidth}{\centering\normalfont\small
    $^{1}$Key Laboratory of Transportation Tunnel Engineering, Ministry of Education,\\
    Southwest Jiaotong University, Chengdu, China\\
    $^{2}$Tsinghua University, Beijing, China\\
    $^{3}$School of Information and Software Engineering,\\
    University of Electronic Science and Technology of China, Chengdu, China\\
    $^{4}$Department of Mechanical Engineering, The University of Hong Kong, Hong Kong, China\\
    $^{5}$Department of Data Science, New Jersey Institute of Technology, Newark, NJ, USA
  }%
}
\hypersetup{
  hidelinks,
  pdftitle={Learning Domain Trajectories with Flow Matching for Gradual Domain Adaptation},
  pdfauthor={Yubo Huang, Yushe Cao, Zixi Wang, Jingzehua Xu, Xin Lai, Xiongyu Hu, Shuai Zhang}
}

\begin{document}

\maketitle

\begin{abstract}
\input{sections/00_abstract}
\end{abstract}

\input{sections/01_introduction}
\input{sections/02_related_work}
\input{sections/03_method}

\input{sections/04_experiments}
\input{sections/05_conclusion}

\renewcommand{\acksection}{\section*{Acknowledgments}}
\begin{ack}
\input{sections/acknowledgments}
\end{ack}

\section*{Competing Interests}
\input{sections/competing_interests}

\bibliographystyle{plain}
\bibliography{references}

\appendix
\input{sections/06_limitations}
\input{sections/appendix/appendix}

\end{document}

%% file: sections/00_abstract.tex
Large distribution shifts remain a primary obstacle for domain generalization and unsupervised adaptation. Although gradual domain adaptation alleviates this difficulty by introducing ordered intermediate domains, most existing approaches still operate in a discrete, step-wise manner, which can accumulate errors and often relies on unstable Min--Max optimization. We propose \textbf{FlowGDA}, a flow-matching framework that models gradual adaptation as \emph{continuous} transport on a statistical manifold in a representation space. FlowGDA learns time-dependent velocity fields that induce smooth domain trajectories, using a simulation-free regression objective rather than adversarial training. To prevent semantic drift, we couple left-to-right transport between consecutive domains with a source-anchored transport and a semantic consistency loss, and we further stabilize learning via a two-stage sequential optimization scheme. Extensive experiments demonstrate that our method outperforms state-of-the-art techniques, validating flow matching as an effective alternative for domain adaptation.

%% file: sections/01_introduction.tex
\section{Introduction}

\begingroup
\setlength{\intextsep}{4pt}
\setlength{\columnsep}{12pt}
\begin{wrapfigure}[13]{r}{0.50\textwidth}
    \centering
    \includegraphics[width=\linewidth]{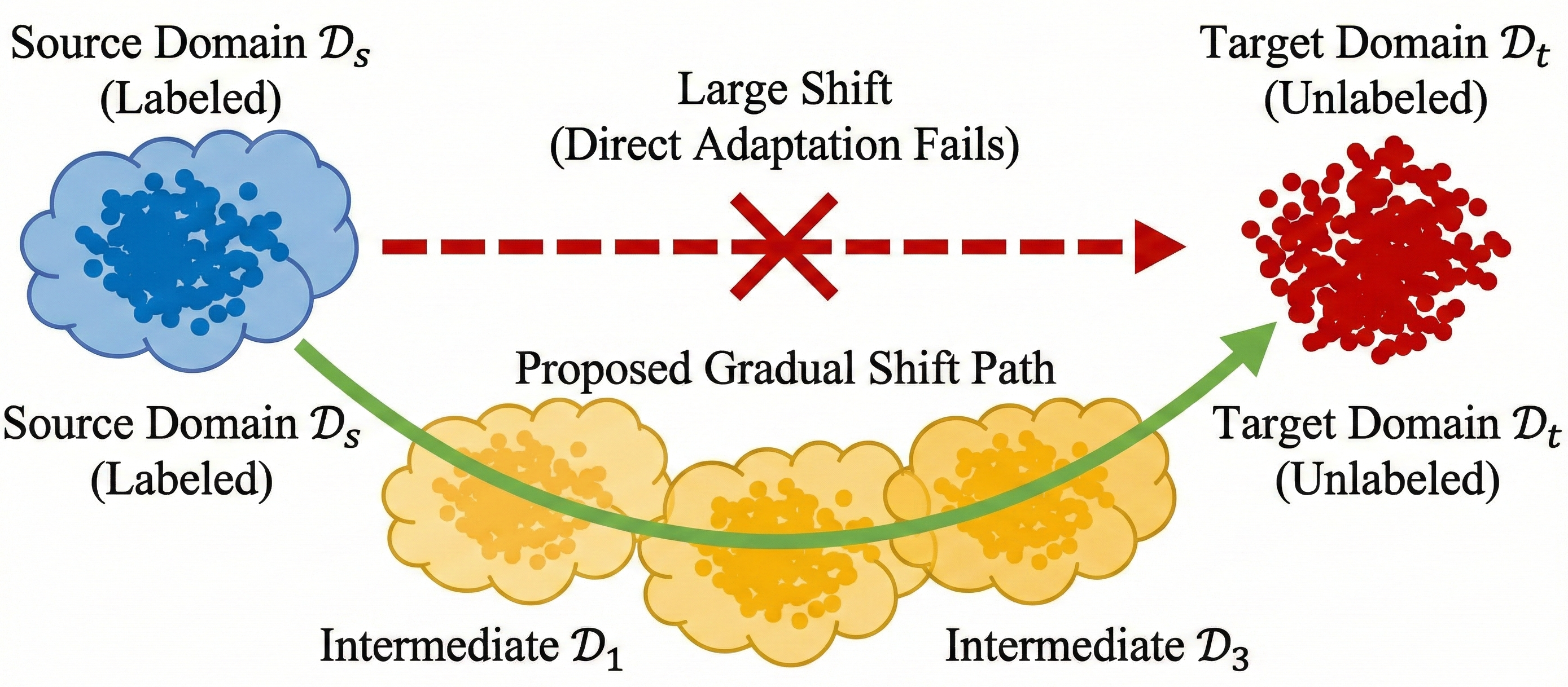}
    \captionsetup{font=small,skip=4pt}
    \caption{Motivation of gradual domain adaptation under large distribution shifts. Gradual domain adaptation introduces intermediate domains to form a smooth transition path, reducing the discrepancy between consecutive domains and facilitating stable knowledge transfer.}
    \label{fre1}
\end{wrapfigure}

Machine learning models often suffer from performance degradation under domain shift, where the test distribution differs from the training distribution~\cite{pan2010survey,bengio2013representation,bendavid2010theory}. Although traditional domain adaptation methods align a labeled source domain with an unlabeled target domain~\cite{ganin2016domain,long2015learning,tzeng2017adversarial}, direct alignment becomes unreliable when the distribution gap is large, as illustrated in Fig.~\ref{fre1}. Gradual domain adaptation (GDA) mitigates this issue by introducing ordered intermediate domains, decomposing a large shift into a sequence of smaller transitions. This setting has strong theoretical support, with intermediate domains shown to substantially improve generalization bounds over direct adaptation~\cite{wang2022understanding}.

\endgroup

Despite its promise, existing GDA methods typically treat domain evolution as a discrete sequence of adaptation steps. Such stepwise procedures may accumulate errors and fail to capture the smooth geometric structure of distribution shifts~\cite{li2025gradual}. Many methods also rely on adversarial training~\cite{ganin2016domain,tzeng2017adversarial} or complex optimal-transport optimization~\cite{villani2009optimal,peyré2019computational,xiao2026yang,yang2025automated}, which can be unstable or computationally demanding in high-dimensional representation spaces~\cite{goodfellow2014generative,flamary2024recent}. These limitations motivate a stable and efficient framework for learning continuous domain trajectories.

Flow matching~\cite{lipman2023flow} offers a natural alternative. By directly regressing velocity fields of continuous normalizing flows~\cite{chen2018neural,rezende2015variational,yao2025learning}, it avoids simulation-heavy or adversarial objectives, and its extensions to Riemannian manifolds~\cite{yao2022riemannian,chen2023riemannian} make it suitable for structured representation spaces. Motivated by this, we propose \emph{FlowGDA}, a flow-matching framework that views gradual domain adaptation as continuous transport on a statistical manifold~\cite{pooladian2023categorical}. Instead of adversarially aligning domains, FlowGDA learns velocity fields that move representations along smooth trajectories while preserving semantic information. Our contributions are summarized as follows:

\begin{itemize}
    \item We formulate gradual domain adaptation as flow matching on a statistical manifold, enabling continuous representation transport without adversarial training.
    \item We design a simulation-free objective with dual source-to-intermediate and intermediate-to-target flows, together with semantic consistency regularization to reduce drift.
    \item We provide theoretical analysis bounding the target error and show through experiments on multiple benchmarks that FlowGDA improves accuracy, stability, and efficiency over existing GDA methods.
\end{itemize}

%% file: sections/02_related_work.tex
\section{Related Work}



\subsection{Gradual Domain Adaptation}

Gradual domain adaptation addresses large domain gaps by exploiting ordered intermediate domains. Early theory showed that self-training over intermediate domains can outperform direct adaptation when consecutive shifts are small, while later analyses established tighter bounds and characterized optimal adaptation paths~\cite{wang2022understanding,zhang2019bridging}. Algorithmically, existing methods construct or exploit intermediate domains through optimal transport, manifold-constrained robust optimization~\cite{li2025gradual}, graph-based adaptation~\cite{wang2025gradual}, and domain osmosis~\cite{wang2026self,wang2025multi}. However, most approaches still treat the domain path as a discrete sequence, which may accumulate errors and underuse the smooth structure of gradual distribution evolution. Our work instead models this evolution as a continuous flow in representation space.

\subsection{Optimal Transport for Domain Adaptation}

Optimal transport provides a geometric framework for comparing and aligning probability distributions~\cite{villani2009optimal,peyré2019computational,yao2026disa}. It has been widely used in domain adaptation to learn transport plans that map source samples to target samples while preserving class structure~\cite{courty2015optimal}. Wasserstein-based methods further connect optimal transport with adversarial learning, including Wasserstein GANs~\cite{arjovsky2017wasserstein,gulrajani2017improved} and domain adaptation methods such as WDGRL~\cite{wdgrl2018}. Although these methods offer meaningful distributional geometry, they often require costly transport computation or adversarial optimization with difficult Lipschitz constraints, especially in high-dimensional feature spaces~\cite{flamary2024recent,zhao2025universal}. FlowGDA keeps the geometric transport perspective while replacing adversarial alignment with simulation-free velocity-field regression.

%% file: sections/03_method.tex
\section{Problem Formulation}

\noindent\textbf{Preliminary Note.}
In this manuscript, we focus on representation-level flow modeling for gradual domain adaptation (GDA). We follow the standard GDA assumption that the source domain, the intermediate domains, and the target domain are ordered along a gradual adaptation path. This assumption is used to decompose a large source-to-target shift into a sequence of smaller local shifts, rather than to introduce additional supervision on the intermediate or target domains.

Gradual domain adaptation considers a labeled source domain
$\mathcal{D}_s=\{(x_i^s,y_i^s)\}_{i=1}^{n_s}$, a sequence of unlabeled intermediate domains
$\{\mathcal{D}_1,\ldots,\mathcal{D}_{T-1}\}$, and an unlabeled target domain $\mathcal{D}_t$.
Here, $x_i^s\in\mathcal{X}$ is the $i$-th source input, $y_i^s\in\mathcal{Y}$ is its label,
$n_s$ is the number of labeled source samples, $\mathcal{X}$ denotes the input space, and
$\mathcal{Y}$ denotes the label space. The integer $T$ denotes the number of source-to-target
transition intervals, so that the complete domain chain can be written as
$\mathcal{D}_0,\mathcal{D}_1,\ldots,\mathcal{D}_T$, with
$\mathcal{D}_0=\mathcal{D}_s$ and $\mathcal{D}_T=\mathcal{D}_t$.
We use $\mathcal{D}_s$, $\mathcal{D}_i$, and $\mathcal{D}_t$ both to denote empirical domains
and, when they appear inside expectations, their underlying data distributions.

The domains are ordered such that consecutive distribution shifts are milder than the direct
source-to-target shift. The goal is to learn a classifier
$f=h\circ\phi:\mathcal{X}\rightarrow\mathcal{Y}$, where
$\phi:\mathcal{X}\rightarrow\mathcal{Z}$ maps an input into a representation space
$\mathcal{Z}$, $h:\mathcal{Z}\rightarrow\mathcal{Y}$ maps a representation to a label
prediction, and $\circ$ denotes function composition. The target error is defined as
\begin{equation}
	\epsilon_t=
	\mathbb{E}_{(x,y)\sim\mathcal{D}_t}
	\left[\mathbb{I}[f(x)\neq y]\right],
\end{equation}
where $(x,y)$ is an input-label pair from the target domain, $y$ is used only for evaluation,
$\mathbb{E}$ denotes expectation, and $\mathbb{I}[\cdot]$ is the indicator function that equals
one when its argument is true and zero otherwise. The subscript $t$ in $\epsilon_t$ denotes the
target domain, while the scalar $t\in[0,1]$ introduced later denotes continuous transport time.

\begin{wrapfigure}{r}{0.50\linewidth}
	\vspace{-14pt}
	\centering
	\includegraphics[width=\linewidth]{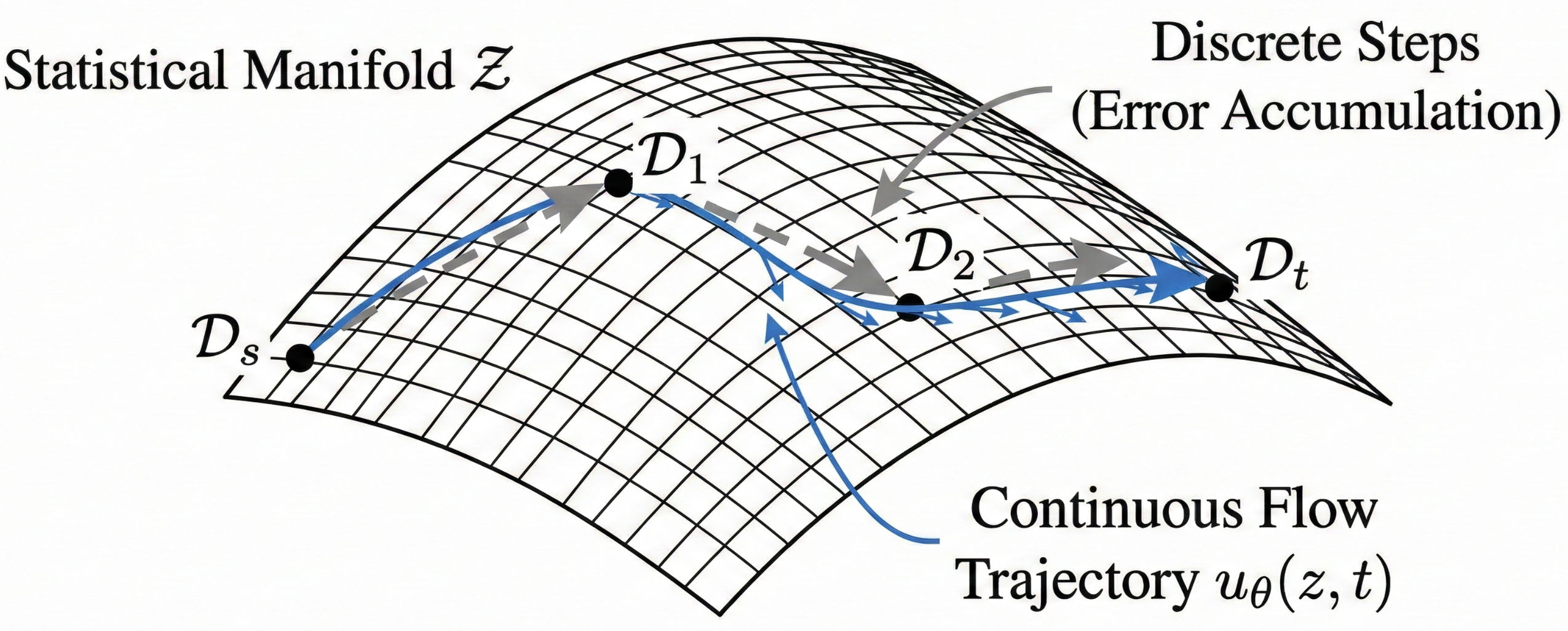}
	\vspace{-0.6em}
	\caption{Domain evolution on a statistical manifold. Discrete steps can accumulate errors, while the continuous flow $u_\theta(z,t)$ transports $\mathcal{D}_s$ toward $\mathcal{D}_t$.}
    \vspace{-30pt}
	\label{fre2}
\end{wrapfigure}

FlowGDA views representations from different domains as lying near a shared statistical manifold.
Under this view, domain shift corresponds to movement along the manifold, and gradual adaptation
amounts to learning smooth velocity fields that transport representations across neighboring
domains while preserving task-relevant semantics.

\section{Methodology}

FlowGDA formulates gradual domain adaptation as continuous transport in
representation space.As shown in Fig.~\ref{fre2}, discrete stepwise
adaptation may accumulate local misalignment along the domain path. FlowGDA
instead models domain evolution as a smooth trajectory on a statistical
manifold, governed by a time-dependent velocity field $u_\theta(z,t)$.

This velocity field is learned through flow matching, avoiding adversarial
optimization and trajectory simulation. Fig.~\ref{fig:framework} summarizes the
framework, including dual flow-matching losses, semantic consistency
regularization, and two-stage sequential optimization over consecutive domain
pairs.

\begin{figure*}[t]
	\centering
	\includegraphics[width=\textwidth]{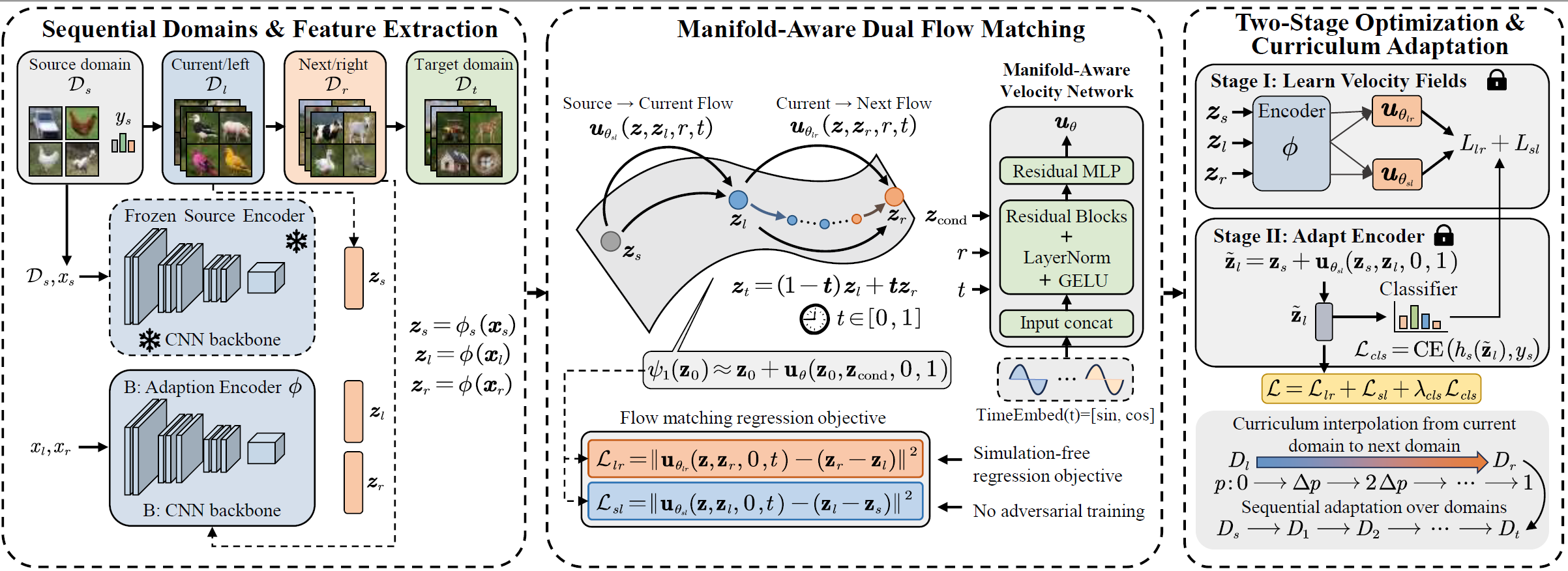}
	\caption{Overview of FlowGDA. FlowGDA models gradual domain transport with source-to-left and left-to-right velocity fields, enforces semantic consistency using a fixed source classifier, and trains sequentially over domain pairs with curriculum-based two-stage optimization.}
	\label{fig:framework}
\end{figure*}

\subsection{Flow Matching for Domain Transport}
\label{sec:method_fm}

We first describe the basic transport between two consecutive domains. Let $\mathcal{D}_l$ and
$\mathcal{D}_r$ denote the current ``left'' domain and the next ``right'' domain, respectively.
The subscripts $l$ and $r$ are mnemonic: $l$ refers to the departure domain in the current
adaptation step, and $r$ refers to its neighboring destination domain. For samples
$x_l\sim\mathcal{D}_l$ and $x_r\sim\mathcal{D}_r$, the trainable encoder
$\phi:\mathcal{X}\rightarrow\mathcal{Z}$ maps them to
$z_l=\phi(x_l),\ z_r=\phi(x_r),$
where $z_l,z_r\in\mathcal{Z}$ are their representations, $\mathcal{Z}\subseteq\mathbb{R}^d$
is the representation space, $\mathbb{R}^d$ is the $d$-dimensional Euclidean space, and $d$ is
the representation dimension.

A time-dependent velocity field
$u_\theta(z,t):\mathcal{Z}\times[0,1]\rightarrow\mathcal{Z}$ induces a continuous flow map
$\psi_t:\mathcal{Z}\rightarrow\mathcal{Z}$ through the ordinary differential equation
\begin{equation}
	\frac{d\psi_t(z_0)}{dt}
	=
	u_\theta(\psi_t(z_0),t),
	\qquad
	\psi_0(z_0)=z_0,
	\label{eq:method_ode}
\end{equation}
where $z_0\in\mathcal{Z}$ is the initial representation, $t\in[0,1]$ is the transport time,
$\psi_t(z_0)$ is the representation reached at time $t$, and $\theta$ denotes the trainable
parameters of the velocity network. The derivative $d\psi_t(z_0)/dt$ describes the instantaneous
movement of the representation along the flow.

Rather than simulating the ODE during training, FlowGDA adopts conditional flow matching. Given a
paired representation $(z_l,z_r)$, we define the deterministic interpolation path
\begin{equation}
	z_t=(1-t)z_l+t z_r,\qquad t\in[0,1].
	\label{eq:method_linear_path}
\end{equation}
Here, $z_t$ is the intermediate representation at transport time $t$; when $t=0$, the path starts
from $z_l$, and when $t=1$, the path reaches $z_r$. Equivalently, this interpolation induces the
degenerate conditional distribution
\begin{equation}
	p_t(z\mid z_l,z_r)
	=
	\delta\big(z-((1-t)z_l+t z_r)\big),
	\label{eq:method_dirac_path}
\end{equation}
where $p_t(z\mid z_l,z_r)$ denotes the conditional distribution of the intermediate representation
$z$ at time $t$, and $\delta(\cdot)$ is the Dirac delta distribution concentrated at the interpolated
point.

The velocity that exactly generates the path in Equation~\eqref{eq:method_linear_path} is
\begin{equation}
	u_t^\star(z_t\mid z_l,z_r)
	=
	\frac{d z_t}{dt}
	=
	z_r-z_l.
	\label{eq:method_cond_velocity}
\end{equation}
Here, $u_t^\star$ denotes the ideal conditional velocity, and $z_r-z_l$ is the displacement from the
left-domain representation to the right-domain representation. Based on this observation, the flow
matching loss is
\begin{equation}
	\mathcal{L}_{FM}
	=
	\mathbb{E}_{t\sim\operatorname{Unif}[0,1]}
	\mathbb{E}_{(z_l,z_r)\sim\pi_{lr}}
	\left[
	\left\|
	u_\theta(z_t,t)-(z_r-z_l)
	\right\|_2^2
	\right],
	\label{eq:method_fm_loss}
\end{equation}
where $\mathcal{L}_{FM}$ is the flow matching objective, $\operatorname{Unif}[0,1]$ is the uniform
distribution on $[0,1]$, $\pi_{lr}$ is the coupling between representations from $\mathcal{D}_l$
and $\mathcal{D}_r$, and $\|\cdot\|_2$ denotes the Euclidean norm.

\subsection{Manifold-Aware Flow Matching}
\label{sec:method_manifold}

Although Equation~\eqref{eq:method_linear_path} is written as a Euclidean interpolation, domain
representations usually concentrate near a lower-dimensional statistical manifold embedded in
$\mathcal{Z}$.A velocity field that only observes $z_t$ and $t$ may ignore where the current
transport step lies along the global source-to-target trajectory. To address this issue, FlowGDA
conditions the velocity field on both a destination representation and normalized domain coordinates.

For the ordered chain $\mathcal{D}_0,\mathcal{D}_1,\ldots,\mathcal{D}_T$, each domain is assigned
a normalized coordinate
$\tau_i=\frac{i}{T},
\qquad
i=0,\ldots,T,$
where $i$ is the domain index, $T$ is the total number of transition intervals, and
$\tau_i\in[0,1]$ indicates the relative position of $\mathcal{D}_i$ along the gradual path.
For a consecutive pair $\mathcal{D}_l=\mathcal{D}_i$ and
$\mathcal{D}_r=\mathcal{D}_{i+1}$, we write $\tau_l=\tau_i$ and
$\tau_r=\tau_{i+1}$. The source coordinate is denoted by $\tau_s=0$.

We parameterize the manifold-aware velocity field as
\begin{equation}
	u_\theta(z,z_{\mathrm{cond}},\tau_a,\tau_b,t):
	\mathcal{Z}\times\mathcal{Z}\times[0,1]\times[0,1]\times[0,1]
	\rightarrow
	\mathcal{Z},
	\label{eq:method_manifold_velocity}
\end{equation}
where $z$ is the current representation to be transported, $z_{\mathrm{cond}}\in\mathcal{Z}$ is
a conditioning representation sampled from the destination domain, $\tau_a$ is the normalized
coordinate of the departure domain, $\tau_b$ is the normalized coordinate of the destination domain,
$a$ and $b$ denote generic departure and destination indices, and $t$ is the local transport time.
The time and domain coordinates are embedded using sinusoidal features:
\begin{equation}
	\operatorname{TimeEmbed}(\tau)
	=
	\left[
	\sin(2\pi\omega_k\tau),
	\cos(2\pi\omega_k\tau)
	\right]_{k=1}^{K_e},
	\label{eq:method_time_embedding}
\end{equation}
where $\tau\in[0,1]$ is a scalar coordinate, $K_e$ is the number of frequency components,
$k$ indexes the frequency component, $\omega_k$ is the $k$-th frequency scale, and
$[\cdot]_{k=1}^{K_e}$ denotes concatenation over all frequency components. The symbol $\pi$ in
$2\pi$ is the circle constant and is distinct from the coupling $\pi_{lr}$ in
Equation~\eqref{eq:method_fm_loss}. In implementation, the embeddings of $\tau_a$, $\tau_b$, and
$t$ are used as conditioning signals for $u_\theta$.

\subsection{Dual Flow Architecture}
\label{sec:method_dual}

For each adaptation step, FlowGDA uses two velocity fields. The first velocity field,
denoted by $u_{\theta_{lr}}$, transports representations from the current domain
$\mathcal{D}_l$ to the next domain $\mathcal{D}_r$. The second velocity field, denoted by
$u_{\theta_{sl}}$, transports source representations from the source domain $\mathcal{D}_s$ to
the current domain $\mathcal{D}_l$. Here, $\theta_{lr}$ and $\theta_{sl}$ are the trainable
parameters of the left-to-right and source-to-left velocity networks, respectively.

The left-to-right flow captures the local distribution shift between neighboring domains, while the
source-to-left flow provides a semantic anchor from the labeled source domain. For a departure
representation $z_a$, a destination conditioning representation $z_b$, and the corresponding domain
coordinates $\tau_a$ and $\tau_b$, we approximate the endpoint of the transport by
\begin{equation}
	\psi_{a\to b}(z_a)
	\approx
	z_a+
	u_{\theta_{ab}}(z_a,z_b,\tau_a,\tau_b,1),
	\label{eq:method_single_step}
\end{equation}
where $\psi_{a\to b}$ is the transport map from domain coordinate $\tau_a$ to domain coordinate
$\tau_b$, $z_a\in\mathcal{Z}$ is the representation being transported, $z_b\in\mathcal{Z}$ is the
destination-side conditioning representation, $\theta_{ab}$ denotes the parameters of the velocity
field from domain $a$ to domain $b$, and the final argument $1$ denotes full transport time.

\subsection{Training Objective}
\label{sec:method_objective}

We now instantiate the above formulation for mini-batch training. At an adaptation step from
$\mathcal{D}_l$ to $\mathcal{D}_r$, let $(x_j^s,y_j^s)$ denote the $j$-th labeled source sample,
$x_j^l$ denote the $j$-th unlabeled sample from the current domain $\mathcal{D}_l$, and
$x_j^r$ denote the $j$-th unlabeled sample from the next domain $\mathcal{D}_r$. We compute
\begin{equation}
	z_j^s=\phi_s(x_j^s),
	\qquad
	z_j^l=\phi(x_j^l),
	\qquad
	z_j^r=\phi(x_j^r),
	\label{eq:method_batch_embeddings}
\end{equation}
where $\phi_s:\mathcal{X}\rightarrow\mathcal{Z}$ is the fixed source encoder,
$\phi:\mathcal{X}\rightarrow\mathcal{Z}$ is the trainable encoder, and
$z_j^s,z_j^l,z_j^r\in\mathcal{Z}$ are the source, current-domain, and next-domain representations. The label $y_j^s\in\mathcal{Y}$ is available only for the source sample.

For the left-to-right flow, the interpolated representation is
\begin{equation}
	z_{j,t}^{lr}
	=
	(1-t)z_j^l+t z_j^r,
	\qquad
	t\sim\operatorname{Unif}[0,1],
	\label{eq:method_zlr_t}
\end{equation}
where the superscript $lr$ denotes the left-to-right path, the subscript $j$ denotes the mini-batch
sample index, and $t$ denotes the sampled transport time. The corresponding flow matching objective is
\begin{equation}
	\mathcal{L}_{lr}
	=
	\mathbb{E}_{t,j}
	\left[
	\left\|
	u_{\theta_{lr}}(z_{j,t}^{lr},z_j^r,\tau_l,\tau_r,t)
	-
	(z_j^r-z_j^l)
	\right\|_2^2
	\right],
	\label{eq:method_llr}
\end{equation}
where $\mathcal{L}_{lr}$ is the left-to-right loss, $\mathbb{E}_{t,j}$ denotes expectation over
$t\sim\operatorname{Unif}[0,1]$ and the mini-batch index $j$, $z_j^r$ is the destination
conditioning representation, $\tau_l$ and $\tau_r$ are the normalized coordinates of the current
and next domains, and $z_j^r-z_j^l$ is the target displacement.

For the source-to-left flow, we similarly define
\begin{equation}
	z_{j,t}^{sl}
	=
	(1-t)z_j^s+t z_j^l,
	\qquad
	t\sim\operatorname{Unif}[0,1],
	\label{eq:method_zsl_t}
\end{equation}
where the superscript $sl$ denotes the source-to-left path from the fixed source representation
$z_j^s$ to the current-domain representation $z_j^l$. The source-to-left flow matching loss is
\begin{equation}
	\mathcal{L}_{sl}
	=
	\mathbb{E}_{t,j}
	\left[
	\left\|
	u_{\theta_{sl}}(z_{j,t}^{sl},z_j^l,\tau_s,\tau_l,t)
	-
	(z_j^l-z_j^s)
	\right\|_2^2
	\right],
	\label{eq:method_lsl}
\end{equation}
where $\mathcal{L}_{sl}$ is the source-to-left loss, $\tau_s=0$ is the source-domain coordinate,
$\tau_l$ is the current-domain coordinate, $z_j^l$ is the destination conditioning representation,
and $z_j^l-z_j^s$ is the target displacement from the source representation to the current-domain
representation.

To preserve semantic information during gradual adaptation, we introduce a curriculum coefficient
$p\in[0,1]$ that controls the strength of source-to-left transport:
\begin{equation}
	\tilde{z}_j^l(p)
	=
	z_j^s
	+
	p\cdot
	u_{\theta_{sl}}(z_j^s,z_j^l,\tau_s,\tau_l,1),
	\label{eq:method_partial_transport}
\end{equation}
where $\tilde{z}_j^l(p)\in\mathcal{Z}$ is the partially transported source representation. When
$p=0$, $\tilde{z}_j^l(p)=z_j^s$ and no source-to-left movement is applied; when $p=1$, the source
representation is fully transported toward the current domain according to
$u_{\theta_{sl}}$. The scalar $p$ in this curriculum term is distinct from the conditional path
distribution $p_t(\cdot\mid z_l,z_r)$ in Equation~\eqref{eq:method_dirac_path}.

The semantic consistency loss is defined as
\begin{equation}
	\mathcal{L}_{cls}(p)
	=
	\mathbb{E}_{j}
	\left[
	\operatorname{CE}
	\left(
	h_s(\tilde{z}_j^l(p)),
	y_j^s
	\right)
	\right],
	\label{eq:method_lcls}
\end{equation}
where $\mathcal{L}_{cls}(p)$ is the classification-based semantic loss at curriculum level $p$,
$\mathbb{E}_{j}$ denotes expectation over source samples in the mini-batch,
$h_s:\mathcal{Z}\rightarrow\mathcal{Y}$ is the fixed source classifier, and
$\operatorname{CE}(\cdot,\cdot)$ denotes the cross-entropy loss between the prediction
$h_s(\tilde{z}_j^l(p))$ and the source label $y_j^s$. The classifier $h_s$ serves as the semantic
head used to preserve source-discriminative structure during adaptation.

The total objective at curriculum level $p$ is
\begin{equation}
	\mathcal{L}(p)
	=
	\mathcal{L}_{lr}
	+
	\mathcal{L}_{sl}
	+
	\lambda_{cls}\mathcal{L}_{cls}(p),
	\label{eq:method_total_loss}
\end{equation}
where $\mathcal{L}(p)$ is the overall training loss and $\lambda_{cls}>0$ balances flow matching with semantic consistency.

\begingroup
\setlength{\columnsep}{16pt}
\setlength{\intextsep}{5pt}
\begin{wrapfigure}[12]{r}{0.48\linewidth}
	\vspace{-0.5em}
	\raggedleft
	\begin{minipage}{\linewidth}
		\small
		\linespread{0.96}\selectfont
		\hrule
		\vspace{1pt}
		\refstepcounter{algorithm}\label{alg:flowgda}
		\noindent\textbf{Algorithm~\thealgorithm} FlowGDA
		\addcontentsline{loa}{algorithm}{\protect\numberline{\thealgorithm}{FlowGDA}}
		\vspace{1pt}
		\hrule
		\vspace{1pt}
		\begin{algorithmic}[1]
			\REQUIRE $\mathcal{D}_{0:T}$, $\phi,h$, $u_{lr},u_{sl}$
			\STATE Cache: $\phi_s\leftarrow\phi$, $h_s\leftarrow h$
			\FOR{$i=0$ to $T-1$}
			\STATE $(\mathcal{D}_l,\mathcal{D}_r)\leftarrow(\mathcal{D}_i,\mathcal{D}_{i+1})$
			\STATE $u_{sl}\leftarrow u_{lr}$; reset $u_{lr}$
			\FOR{$p=0,\Delta p,\ldots,1$}
			\STATE Freeze $\phi$; update $u_{lr},u_{sl}$
			\STATE Freeze $u_{lr},u_{sl}$; update $\phi$
			\ENDFOR
			\ENDFOR
			\RETURN Adapted $\phi,h$
		\end{algorithmic}
		\vspace{1pt}
		\hrule
	\end{minipage}
	\vspace{-0.7em}
\end{wrapfigure}

Finally, FlowGDA proceeds sequentially along the ordered domains
$\mathcal{D}_0,\mathcal{D}_1,\ldots,\mathcal{D}_T$.For each consecutive pair
$(\mathcal{D}_l,\mathcal{D}_r)$, we first optimize the velocity fields
$u_{\theta_{lr}}$ and $u_{\theta_{sl}}$ while freezing the encoder $\phi$, so that the local
transport directions can be estimated in a stable representation space. We then freeze the learned
velocity fields and update the encoder $\phi$ using the curriculum-controlled objective in
Equation~\eqref{eq:method_total_loss}. During this stage, the curriculum coefficient $p$ is gradually
increased from $0$ to $1$, allowing the model to move from source-preserving representations toward
the next domain without abruptly losing semantic consistency. The complete procedure is summarized in
Algorithm~\ref{alg:flowgda}.
\par
\endgroup

\begin{table*}[!htbp]
	\centering
	\caption{Target-domain accuracy (\%) on four GDA benchmarks; ''N/A'' means not reported.}
	\label{tab:gda_main}
	\renewcommand{\arraystretch}{1.0}
	\resizebox{1\textwidth}{!}{%
		\begin{tabular}{llcccc}
			\toprule
			\textbf{Category} & \textbf{Method} & \textbf{Rotated MNIST} & \textbf{Color-Shift MNIST} & \textbf{Portraits} & \textbf{Cover Type} \\
			\midrule
			\multirow{3}{*}{UDA} 
			& DANN~\cite{ganin2016domain}       & 44.2 & 56.5 & 73.8 & N/A \\
			& DeepCORAL~\cite{sun2016deep}           & 49.6 & 63.5 & 71.9 & N/A \\
			& DeepJDOT~\cite{long2018conditional}  & 51.6 & 65.8 & 72.5 & N/A \\
			\midrule
			\multirow{4}{*}{Gradual} 
			& GST~\cite{kumar2020understanding}      & 83.8 & 74.0 & 82.6 & 73.5 \\
			& GOAT~\cite{he2023gradual}              & 86.4 & 91.8 & 83.6 & 69.9 \\
			& AST~\cite{shi2024adversarial}          & 90.6 & N/A  & 84.8 & N/A \\
			& CNF~\cite{sagawa2025gradual}           & 62.6 & N/A  & 84.6 & N/A \\
			\midrule
			\textbf{Ours} & \textbf{FlowGDA} & \textbf{97.0} & \textbf{99.2} & \textbf{87.6} & \textbf{76.1} \\
			\bottomrule
		\end{tabular}%
	}
\end{table*}

%% file: sections/04_experiments.tex
\section{Experiments}

\subsection{Experimental Setup}
\label{sec:exp_setup}

\paragraph{Datasets.} Following the standard gradual domain adaptation (GDA) protocol, FlowGDA is evaluated on four benchmarks covering complementary domain trajectories. Rotated MNIST captures smooth geometric shift through increasing rotation angles; Color-Shift MNIST captures appearance shift through gradual changes in color statistics; Portraits captures real-world temporal drift across decades; and Cover Type evaluates tabular covariate shift in structured feature space. For each benchmark, domains form an ordered chain $\mathcal{D}_0=\mathcal{D}_s,\mathcal{D}_1,\ldots,\mathcal{D}_T=\mathcal{D}_t$ according to the natural shift parameter. Only the source domain is labeled, while intermediate and target domains are unlabeled during adaptation. All methods are trained sequentially along the chain.

\paragraph{Baselines.} FlowGDA is compared with direct unsupervised domain adaptation baselines, including DANN~\cite{ganin2016domain}, DeepCORAL~\cite{sun2016deep}, and CDAN~\cite{long2018conditional}, as well as gradual adaptation methods including GST~\cite{kumar2020understanding}, GOAT~\cite{he2023gradual}, AST~\cite{shi2024adversarial}, and CNF~\cite{sagawa2025gradual}.

\paragraph{Architectures and optimization.} All methods are implemented in PyTorch. Image benchmarks use a three-layer CNN encoder followed by a two-layer fully connected classifier, while Cover Type uses a three-layer MLP. Velocity fields are implemented as 4-layer residual MLPs with hidden dimension 512. AdamW~\cite{loshchilov2019decoupled} is used with learning rates $10^{-4}$ for velocity networks and $10^{-5}$ for the encoder. The classification weight is set to $\lambda_{cls}=0.1$. Models are trained for 10 epochs per domain pair with batch size 64 on NVIDIA RTX 5090 GPUs.

\subsection{Results}

FlowGDA achieves the best performance across all four benchmarks. Compared with the strongest reported gradual baseline, FlowGDA improves target accuracy by +6.4 points on Rotated MNIST, +7.4 on Color-Shift MNIST, and +2.8 on Portraits; on Cover Type, it outperforms GST by +2.6 points. Direct UDA baselines perform substantially worse under large shifts, confirming the importance of intermediate domains. These results show that modeling gradual adaptation as continuous representation transport provides consistent gains over both direct alignment and discrete gradual adaptation.

\begin{figure*}[t]
\centering
\begin{tikzpicture}

\def\dataA{
1 0.942 0.018
2 0.958 0.010
3 0.961 0.008
4 0.966 0.007
5 0.963 0.009
6 0.972 0.008
7 0.959 0.021
}

\def\dataB{
1 0.962 0.018 
2 0.918 0.028 
}

\begin{groupplot}[
  group style={group size=2 by 1, horizontal sep=0.95cm},
  width=0.47\textwidth,
  height=0.28\textwidth,
  ymin=0.88, ymax=1.0,
  grid=both,
  grid style={line width=0.1pt, draw=black!10},
  major grid style={line width=0.2pt, draw=black!15},
  tick align=outside,
  tick style={black!60},
  axis line style={black!60},
  label style={black!80, font=\small},
  tick label style={font=\scriptsize},
  title style={black, font=\bfseries\small},
]

\nextgroupplot[
  title={(a) Intermediate-domain sensitivity},
  xlabel={Rotation angle of intermediate domain},
  ylabel={Accuracy},
  xtick={1,2,3,4,5,6,7},
  xticklabels={$\pi/21$,$\pi/10.5$,$\pi/7$,$\pi/6$,$\pi/5.25$,$\pi/4.2$,$\pi/3.5$},
  x tick label style={rotate=35, anchor=east, font=\scriptsize},
]

\addplot[name path=U, draw=none] coordinates {
(1,0.980) (2,0.988) (3,0.989) (4,0.993) (5,0.992) (6,0.995) (7,0.990)
};
\addplot[name path=L, draw=none] coordinates {
(1,0.944) (2,0.968) (3,0.973) (4,0.979) (5,0.974) (6,0.979) (7,0.948)
};
\addplot[fill=black!12, draw=none] fill between[of=U and L];

\addplot[
  thick,
  mark=*,
  mark size=1.8pt,
] coordinates {
(1,0.962) (2,0.978) (3,0.981) (4,0.986) (5,0.983) (6,0.987) (7,0.969)
};

\nextgroupplot[
  title={(b) Worst intermediate vs. no intermediate},
  xlabel={Setting},
  xtick={1,2},
  xticklabels={Worst $\mathcal{D}_1$, w/o $\mathcal{D}_1$},
  x tick label style={font=\scriptsize},
  ymajorgrids=true,
  bar width=22pt,
   xmin=0.5, xmax=2.5,
   enlarge x limits=0.3,
]

\addplot+[
  ybar,
  bar width=20pt,
  error bars/.cd,
    y dir=both,
    y explicit,
] coordinates {
(1,0.942) +- (0,0.018)
(2,0.918) +- (0,0.028)
};

\end{groupplot}
\end{tikzpicture}
\caption{Rotated MNIST sensitivity to intermediate-domain choice. (a) Accuracy vs. intermediate rotation (mean$\pm$std over five runs). (b) Removing the intermediate domain causes the largest degradation.}
\vspace{-10pt}
\label{fig:intermediate_sensitivity}
\end{figure*}

On Rotated MNIST, the effect of intermediate-domain choice is further evaluated by fixing the source at $0^\circ$ and the target at $\pi/3$, while varying a single intermediate domain $\mathcal{D}_1$. As shown in Fig.~\ref{fig:intermediate_sensitivity}(a), intermediate angles near the midpoint of the source-target path, around $\pi/6$, yield the highest and most stable accuracy, whereas very small or overly large intermediate shifts lead to lower accuracy and higher variance. Fig.~\ref{fig:intermediate_sensitivity}(b) further shows that direct source-to-target adaptation performs worst, indicating that intermediate domains are essential for large shifts, while their quality further affects stability and final performance.

\subsection{Ablation Study}
\label{sec:ablation}

To better understand how different components contribute across different types of domain shifts, we conduct a dataset-wise ablation study by removing one component at a time from FlowGDA. Table~\ref{tab:ablation} reports the target accuracy on each benchmark, with the accuracy drop relative to the full model shown in parentheses. The effect of each component is not uniform across datasets. Removing the semantic consistency loss $\mathcal{L}_{cls}$ causes the largest degradation on average, especially on Portraits and Cover Type, where preserving class semantics under complex temporal or tabular covariate shifts is more challenging. The single-flow variant also suffers larger drops on the real-world and tabular datasets, indicating that source-anchored transport provides useful regularization beyond local left-to-right adaptation. Manifold-aware conditioning improves over time-only velocity parameterization across all datasets, while the two-stage optimization strategy consistently yields more stable final performance than joint optimization.

\begin{table*}[t]
\centering
\caption{Dataset-wise ablation study: target accuracy (\%) with drops from full FlowGDA in parentheses.}
\label{tab:ablation}
\renewcommand{\arraystretch}{1.0}
\resizebox{0.95\textwidth}{!}{%
\begin{tabular}{lccccc}
\toprule
\textbf{Variant} 
& \textbf{Rotated MNIST} 
& \textbf{Color-Shift MNIST} 
& \textbf{Portraits} 
& \textbf{Cover Type} 
& \textbf{Avg. $\Delta$ (pp)} \\
\midrule
FlowGDA (full) 
& \textbf{97.0} 
& \textbf{99.2} 
& \textbf{87.6} 
& \textbf{76.1} 
& -- \\
\midrule
Single flow 
& 95.5 {\small($-1.5$)} 
& 98.0 {\small($-1.2$)} 
& 85.6 {\small($-2.0$)} 
& 73.6 {\small($-2.5$)} 
& $-1.8$ \\
w/o semantic loss $\mathcal{L}_{cls}$ 
& 95.0 {\small($-2.0$)} 
& 97.6 {\small($-1.6$)} 
& 84.8 {\small($-2.8$)} 
& 73.3 {\small($-2.8$)} 
& $-2.3$ \\
Time-only velocity 
& 95.8 {\small($-1.2$)} 
& 98.2 {\small($-1.0$)} 
& 85.7 {\small($-1.9$)} 
& 74.2 {\small($-1.9$)} 
& $-1.5$ \\
Joint optimization 
& 96.3 {\small($-0.7$)} 
& 98.6 {\small($-0.6$)} 
& 86.5 {\small($-1.1$)} 
& 74.9 {\small($-1.2$)} 
& $-0.9$ \\
\bottomrule
\end{tabular}%
}
\vspace{-8pt}
\end{table*}

\section{Theoretical Analysis}
\label{sec:theory}

In this section, we theoretically explain why modeling gradual domain adaptation as a \emph{continuous transport trajectory} leads to stable optimization and reduced error accumulation. Our analysis connects flow matching to optimal transport on representation distributions and derives a generalization bound for the target domain under gradual shifts.

\subsection{Generalization Error Bounds for Flow-Based Gradual Adaptation}
\label{sec:theory_bound}

Let $\mathcal{D}_0=\mathcal{D}_s,\mathcal{D}_1,\ldots,\mathcal{D}_{T}=\mathcal{D}_t$ be ordered domains.
For each domain $\mathcal{D}_i$, let $P_i^X$ denote the marginal distribution over inputs and
$P_i^Z = (\phi)_\# P_i^X$ the pushforward distribution in representation space induced by encoder $\phi$.

We analyze expected risk with a Lipschitz surrogate loss.
Let $\ell(h(z),y)$ be a bounded loss (e.g., cross-entropy or margin loss), and assume it is $L_\ell$-Lipschitz w.r.t.\ $z$ for hypotheses in $\mathcal{H}$.

\begin{definition}[Wasserstein Discrepancy in Representation Space]
\label{def:w1}
For two distributions $P,Q$ on $\mathcal{Z}$, the Wasserstein-1 distance is
\begin{equation}
W_1(P,Q) = \sup_{\|f\|_{\mathrm{Lip}}\le 1} \left|\mathbb{E}_{z\sim P}[f(z)] - \mathbb{E}_{z\sim Q}[f(z)]\right|.
\end{equation}
\end{definition}

\begin{lemma}[Risk Difference Controlled by $W_1$]
\label{lem:risk_w1}
Assume $\ell(h(\cdot),y)$ is $L_\ell$-Lipschitz in $z$ for all $y\in\mathcal{Y}$ and $h\in\mathcal{H}$.
Then for any $h\in\mathcal{H}$ and any two domains $i,j$,
\begin{equation}
\left|\epsilon_i(h)-\epsilon_j(h)\right|
\;\le\;
L_\ell\, W_1(P_i^Z,P_j^Z),
\end{equation}
where $\epsilon_i(h)=\mathbb{E}_{(x,y)\sim \mathcal{D}_i}\left[\ell(h(\phi(x)),y)\right]$.
\end{lemma}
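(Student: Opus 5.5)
The plan is to reduce the statement to the dual (Kantorovich--Rubinstein) form of $W_1$ in Definition~\ref{def:w1}. The two risks $\epsilon_i(h)$ and $\epsilon_j(h)$ must first be rewritten as expectations of one and the same function of $z$ under $P_i^Z$ and $P_j^Z$, respectively. Once that is done, the bound is immediate, because a function that is $L_\ell$-Lipschitz becomes admissible in the supremum of Definition~\ref{def:w1} after dividing by $L_\ell$.

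\textbf{Step 1: rewrite the risks on $\mathcal{Z}$.} Let $Q_i(\cdot\mid z)$ denote the conditional law of $y$ given $\phi(x)=z$ under $\mathcal{D}_i$. Set
\begin{equation*}
g_i(z)=\mathbb{E}_{y\sim Q_i(\cdot\mid z)}\bigl[\ell(h(z),y)\bigr].
\end{equation*}
By the tower property and the definition $P_i^Z=(\phi)_\#P_i^X$, we have $\epsilon_i(h)=\mathbb{E}_{z\sim P_i^Z}[g_i(z)]$, and likewise for $j$.

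\textbf{Step 2: identify a common Lipschitz function (the main obstacle).} The statement only constrains the marginals $P_i^Z$ and $P_j^Z$. The inequality therefore cannot hold if the labeling changes between domains: identical marginals with flipped labels would give a nonzero left-hand side and a zero right-hand side. Hence the hypothesis ``$\ell(h(\cdot),y)$ is $L_\ell$-Lipschitz in $z$'' has to be read together with the standing GDA assumption that consecutive shifts are covariate shifts in representation space. Concretely, I would assume $Q_i(\cdot\mid z)=Q_j(\cdot\mid z)=:Q(\cdot\mid z)$, so that $g_i=g_j=:g$.

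It remains to show that $g$ is $L_\ell$-Lipschitz. This is immediate if the label is a fixed function of the representation, or if $Q$ is locally constant on the support. In general, the honest requirement is that the averaged loss $g$ be $L_\ell$-Lipschitz. I would state this explicitly as the precise meaning of the lemma's Lipschitz hypothesis. When $Q$ does not depend on $z$, it follows from the pointwise assumption:
\begin{equation*}
|g(z)-g(z')|\le \mathbb{E}_{y}\bigl|\ell(h(z),y)-\ell(h(z'),y)\bigr|\le L_\ell\|z-z'\|.
\end{equation*}

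\textbf{Step 3: conclude by duality.} With $f=g/L_\ell$, which satisfies $\|f\|_{\mathrm{Lip}}\le 1$, we get
\begin{equation*}
|\epsilon_i(h)-\epsilon_j(h)| = L_\ell\,\bigl|\mathbb{E}_{P_i^Z}f-\mathbb{E}_{P_j^Z}f\bigr| \le L_\ell\,W_1(P_i^Z,P_j^Z),
\end{equation*}
since $f$ is one of the functions in the supremum of Definition~\ref{def:w1}. Boundedness of $\ell$ guarantees that all expectations are finite, so no integrability issue arises. If $L_\ell=0$, then $g$ is constant and both sides vanish.
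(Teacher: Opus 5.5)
Your proposal is correct and follows essentially the same route as the paper's proof. The appendix likewise imposes a shared-labeling assumption $P_i(y\mid z)=P_j(y\mid z)$ and directly assumes that the averaged conditional risk $g_h(z)=\mathbb{E}_{y\sim P(y\mid z)}[\ell(h(z),y)]$ is $L_\ell$-Lipschitz; it then writes $\epsilon_i(h)=\mathbb{E}_{z\sim P_i^Z}[g_h(z)]$ and applies the dual form of $W_1$ to $g_h/L_\ell$, exactly as in your Steps 1 and 3. Your observation that the lemma's pointwise Lipschitz hypothesis does not suffice on its own matches the extra standing assumptions the paper adds.
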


Lemma~\ref{lem:risk_w1} implies that controlling the Wasserstein discrepancy between representation distributions directly bounds the change in expected risk across domains.

\begin{theorem}[Target Risk Bound under Gradual Shifts]
\label{thm:gda_w1}
Let $\mathcal{D}_0,\ldots,\mathcal{D}_T$ be ordered domains and $h\in\mathcal{H}$.
Then the target risk satisfies
\begin{equation}
\epsilon_T(h)
\;\le\;
\epsilon_0(h)
+
L_\ell \sum_{i=0}^{T-1} W_1(P_i^Z,P_{i+1}^Z)
+
\lambda^\star,
\end{equation}
where $\lambda^\star=\min_{h\in\mathcal{H}}\left(\epsilon_0(h)+\epsilon_T(h)\right)$ is the ideal joint error term.
\end{theorem}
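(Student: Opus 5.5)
The proof is a telescoping argument built on Lemma~\ref{lem:risk_w1}. For the fixed $h\in\mathcal{H}$ in the statement, write
\begin{equation*}
\epsilon_T(h)-\epsilon_0(h)=\sum_{i=0}^{T-1}\bigl(\epsilon_{i+1}(h)-\epsilon_i(h)\bigr),
\end{equation*}
apply the triangle inequality, and bound each consecutive difference by Lemma~\ref{lem:risk_w1} with $(i,j)=(i,i+1)$, giving $|\epsilon_{i+1}(h)-\epsilon_i(h)|\le L_\ell W_1(P_i^Z,P_{i+1}^Z)$. Summing over $i$ yields
\begin{equation*}
\epsilon_T(h)\le\epsilon_0(h)+L_\ell\sum_{i=0}^{T-1}W_1(P_i^Z,P_{i+1}^Z).
\end{equation*}
This is the claimed bound without $\lambda^\star$. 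Since $\lambda^\star=\min_{h'}(\epsilon_0(h')+\epsilon_T(h'))\ge 0$ (the loss is bounded and, for cross-entropy or margin losses, nonnegative), adding it only weakens the inequality, so the stated result follows. I would also note that $\lambda^\star$ is not needed for this argument, and that it enters only in the standard Ben-David-style variant, where labels may shift.

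The main obstacle is making Lemma~\ref{lem:risk_w1} genuinely applicable, since $\epsilon_i$ is an expectation over the joint law $(z,y)$, not just over $P_i^Z$. The lemma controls risk differences by the marginal $W_1$ only if the labeling is shared across domains, for example a common conditional $P(y\mid z)$ (covariate shift in representation space), or if the lemma is read as already encoding that assumption. I would take the lemma exactly as stated, since the theorem is allowed to use it.

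If instead one wants to avoid the shared-labeling assumption, there is a fallback. Introduce $h^\star$ attaining $\lambda^\star$ and the disagreement risk $\epsilon_i(h,h^\star)$. The triangle inequality for the loss then gives
\begin{equation*}
\epsilon_T(h)\le\epsilon_T(h^\star)+\epsilon_T(h,h^\star),
\end{equation*}
where $\epsilon_T(h,h^\star)$ depends only on the marginal $P_T^Z$. Its change across domains can then be telescoped with $W_1$ exactly as above, using Lipschitzness of $z\mapsto\ell(h(z),h^\star(z))$. Finally, $\epsilon_0(h,h^\star)\le\epsilon_0(h)+\epsilon_0(h^\star)$, and $\epsilon_0(h^\star)+\epsilon_T(h^\star)=\lambda^\star$ produces the stated form. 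This route requires a triangle inequality for $\ell$ and Lipschitzness of the pairwise disagreement loss, and I would state these as assumptions if the fallback is needed.
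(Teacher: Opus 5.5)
Your proof is correct and is essentially the paper's argument: the paper also chains $\epsilon_{i+1}(h)\le\epsilon_i(h)+L_\ell W_1(P_i^Z,P_{i+1}^Z)$ via Lemma~\ref{lem:risk_w1} along the domain chain, then adds the nonnegative $\lambda^\star$, which plays no substantive role. Your concern about applying the lemma is resolved the way you suggested, since the paper's appendix assumes a shared conditional $P(y\mid z)$ across domains and that the conditional risk $g_h$ is $L_\ell$-Lipschitz.
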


\begin{remark}[Error Accumulation in Discrete Adaptation]
\label{rem:accum}
Theorem~\ref{thm:gda_w1} makes explicit that gradual adaptation succeeds when the \emph{sum of consecutive discrepancies} is small.
Methods that perform discrete, independently-estimated alignments can introduce local mismatch at each step, effectively enlarging $\sum_i W_1(P_i^Z,P_{i+1}^Z)$ and causing accumulation.
Our continuous-flow modeling aims to reduce such stepwise mismatch by learning a coherent transport trajectory.
\end{remark}

\subsection{Flow Matching Controls Wasserstein Transport Cost}
\label{sec:theory_ot}

We now connect our flow objective to optimal transport. Let $(P,Q)$ be two distributions on $\mathcal{Z}$ and consider a time-dependent velocity field $u(z,t)$ generating a flow $\psi_t$ by
\begin{equation}
\frac{d\psi_t(z)}{dt} = u(\psi_t(z),t),\quad \psi_0(z)=z.
\end{equation}

\begin{definition}[Benamou--Brenier Transport Energy]
\label{def:bb}
For a velocity field $u$ that transports $P$ to $Q$ along a path $\{P_t\}_{t\in[0,1]}$, define the transport energy
\begin{equation}
\mathcal{E}(u) = \int_0^1 \mathbb{E}_{z\sim P_t}\big[\|u(z,t)\|_2^2\big]\,dt.
\end{equation}
\end{definition}

\begin{lemma}[Dynamic OT Upper Bound]
\label{lem:bb_upper}
For any admissible transport $(P_t,u)$ from $P$ to $Q$, we have
\begin{equation}
W_2^2(P,Q) \;\le\; \mathcal{E}(u).
\end{equation}
\end{lemma}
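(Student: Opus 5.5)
The plan is to build a coupling of $P$ and $Q$ directly from the flow. The transport cost of that coupling will then be bounded by Jensen's inequality in time, with no need to appeal to the full Benamou--Brenier duality. Here ``admissible'' means that $u$ is regular enough, say locally Lipschitz in $z$ with suitable integrability, for the ODE to have a unique flow $\psi_t$, and that $P_t=(\psi_t)_\# P$ with $P_1=Q$. In other words, $(P_t,u)$ solves the continuity equation $\partial_t P_t+\nabla\cdot(P_t u)=0$ and is generated by the flow map. Under this reading, the pair $(z,\psi_1(z))$ with $z\sim P$ defines a coupling $\gamma=(\mathrm{id},\psi_1)_\# P\in\Pi(P,Q)$.

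The first step is to bound $W_2^2$ by the cost of $\gamma$:
\begin{equation*}
W_2^2(P,Q)\le \mathbb{E}_{z\sim P}\big[\|\psi_1(z)-z\|_2^2\big].
\end{equation*}

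The second step writes the displacement as a time integral, $\psi_1(z)-z=\int_0^1 u(\psi_t(z),t)\,dt$. Applying Jensen (or Cauchy--Schwarz) on $[0,1]$ gives
\begin{equation*}
\|\psi_1(z)-z\|_2^2\le\int_0^1\|u(\psi_t(z),t)\|_2^2\,dt.
\end{equation*}

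The third step takes the expectation over $z\sim P$ and exchanges the order of integration by Tonelli, which is allowed because the integrand is nonnegative. This gives $\int_0^1\mathbb{E}_{z\sim P}\|u(\psi_t(z),t)\|_2^2\,dt$. Since $P_t=(\psi_t)_\#P$, the inner expectation equals $\mathbb{E}_{z\sim P_t}\|u(z,t)\|_2^2$, so the whole expression is exactly $\mathcal{E}(u)$. Chaining the three steps proves the lemma.

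The main obstacle is the interpretation of admissibility. If one only assumes that $(P_t,u)$ solves the continuity equation weakly, without a well-defined flow, then the coupling $(\mathrm{id},\psi_1)_\#P$ is not available. In that case I would invoke the superposition principle (Ambrosio--Gigli--Savar\'e). It gives a probability measure on absolutely continuous curves $\gamma$ with $\dot\gamma_t=u(\gamma_t,t)$ whose time marginals are $P_t$. Taking the endpoint law $(\gamma_0,\gamma_1)$ as the coupling, the same Jensen and Tonelli computation goes through, provided $\mathcal{E}(u)<\infty$. If $\mathcal{E}(u)=\infty$, the bound holds trivially. I would state the lemma under the flow-map assumption, which matches how the flow $\psi_t$ is introduced above, and remark on this extension.
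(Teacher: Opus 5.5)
Your argument is correct, but it takes a different route from the paper. The paper gives no standalone proof of this lemma. Inside the proof of Theorem~\ref{thm:fm_controls_w2} it quotes the Benamou--Brenier formula $W_2^2(P,Q)=\inf_{(P_t,v_t)}\int_0^1\mathbb{E}_{z\sim P_t}\|v_t(z)\|_2^2\,dt$, where the infimum runs over weak solutions of the continuity equation with $P_0=P$ and $P_1=Q$. It then observes that any admissible pair is a competitor in that infimum. You instead prove directly the one inequality that is needed: you exhibit the coupling $(\mathrm{id},\psi_1)_\#P$ and bound its cost through Cauchy--Schwarz in time, Tonelli, and $P_t=(\psi_t)_\#P$.

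The paper's citation is shorter, but it invokes the full equality without stating its hypotheses, and the reverse direction (attainment via displacement interpolation) plays no role. Your route isolates exactly the inequality that is used, is elementary when a flow map exists, and makes explicit what ``admissible'' has to mean.

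One shift of emphasis is worth making. The paper's notion of admissibility is the weak continuity equation. Its appendix applies this bound to the marginal field $v_t^\star(z)=\mathbb{E}_\pi[z_r-z_l\mid z_t=z]$ of the interpolation path, which carries no regularity guarantee and need not generate a flow map. So the superposition-principle case you relegate to a closing remark is the case the paper actually relies on, and it is where the nontrivial content of the inequality sits. It belongs in the main argument, with $\mathcal{E}(u)<\infty$ supplying the $L^2$ integrability the Ambrosio--Gigli--Savar\'e theorem requires. Your Jensen--Tonelli computation, applied to the endpoint law of the superposition measure, then finishes the proof.
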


Lemma~\ref{lem:bb_upper} indicates that learning a low-energy velocity field implies a small Wasserstein-2 distance.

\begin{theorem}[Per-Step Discrepancy Control via Flow Matching]
\label{thm:fm_controls_w2}
Consider two consecutive domains $\mathcal{D}_i$ and $\mathcal{D}_{i+1}$ with representation distributions $P_i^Z$ and $P_{i+1}^Z$.
Assume the learned velocity field $u_{\theta,i}$ induces a transport path $(P_t,u_{\theta,i})$ between them.
Then
\begin{equation}
W_2^2(P_i^Z,P_{i+1}^Z) \;\le\; \mathcal{E}(u_{\theta,i}).
\end{equation}
Moreover, minimizing the flow matching objective yields a small empirical estimate of $\mathcal{E}(u_{\theta,i})$ up to approximation and estimation errors.
\end{theorem}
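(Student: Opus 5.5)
The first inequality is Lemma~\ref{lem:bb_upper} applied with $P=P_i^Z$, $Q=P_{i+1}^Z$, $u=u_{\theta,i}$. The hypothesis says $(P_t,u_{\theta,i})$ is an admissible transport, meaning $P_0=P_i^Z$, $P_1=P_{i+1}^Z$ and the continuity equation $\partial_t P_t+\nabla\cdot(P_t u_{\theta,i})=0$ holds. Since the paper assumes Lemma~\ref{lem:bb_upper}, this part is immediate.

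If a self-contained argument is wanted, I would go through the flow map $\psi_t$. Under standard regularity (say $u_{\theta,i}$ locally Lipschitz in $z$ with linear growth), $P_t=(\psi_t)_\#P_i^Z$. Then $(z,\psi_1(z))$ with $z\sim P_i^Z$ is a coupling of $P_i^Z$ and $P_{i+1}^Z$. By Cauchy--Schwarz in time,
\[
\|\psi_1(z)-z\|^2=\Big\|\int_0^1 u(\psi_t(z),t)\,dt\Big\|^2\le\int_0^1\|u(\psi_t(z),t)\|^2dt .
\]
Taking expectations and applying Fubini gives $W_2^2\le \mathcal{E}(u_{\theta,i})$.

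The "moreover" clause is qualitative, so I would make it precise as a decomposition. Let $u^\star$ be the marginal velocity of the linear-interpolation path under the coupling $\pi_{i,i+1}$, i.e.\ $u^\star(z,t)=\mathbb{E}[z_r-z_l\mid z_t=z]$. The standard flow-matching identity is
\[
\mathcal{L}_{FM}(\theta)=\int_0^1\mathbb{E}_{P_t}\|u_\theta-u^\star\|^2dt + C,
\]
where $C=\mathbb{E}\|z_r-z_l\|^2-\int_0^1\mathbb{E}_{P_t}\|u^\star\|^2dt\ge0$. It follows by expanding the square and using the tower property.

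Next I use the triangle inequality in $L^2(P_t\otimes dt)$:
\[
\mathcal{E}(u_\theta)^{1/2}\le \mathcal{E}(u^\star)^{1/2}+\big(\mathcal{L}_{FM}(\theta)-C\big)^{1/2}.
\]
By Jensen, $\mathcal{E}(u^\star)\le \mathbb{E}_{\pi}\|z_r-z_l\|^2$. This quantity is the empirical transport cost of the coupling, and it is small when consecutive representations are close. It is close to $W_2^2$ if $\pi$ is (near-)optimal, e.g.\ a minibatch OT coupling.

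The excess $\mathcal{L}_{FM}(\theta)-\min\mathcal{L}_{FM}$ is then split in the usual way. One part is approximation error, from the network class not containing $u^\star$. The other is estimation error, from the gap between empirical and population loss, controllable by uniform convergence or Rademacher bounds on the bounded regression loss.

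The main obstacle is a subtlety in this step. The path $P_t$ in $\mathcal{E}(u_\theta)$ is the one generated by the learned $u_\theta$, not the interpolation path on which $\mathcal{L}_{FM}$ is evaluated. To bridge them I would invoke a Grönwall-type stability estimate: if $u_\theta$ is $L$-Lipschitz in $z$, the two flows' marginals differ in $W_2$ by at most $e^{L}\,(\mathcal{L}_{FM}-C)^{1/2}$. This yields a statement "up to approximation and estimation errors" with an explicit $e^{L}$ constant. Alternatively, one can read the theorem's hypothesis as saying the interpolation path is the induced path, in which case no stability argument is needed.
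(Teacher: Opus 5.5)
Your proposal is correct, but it takes a different route from the paper. The paper never reasons about the learned field. It fixes a coupling $\pi$, takes $P_t$ to be the law of the interpolant $z_t=(1-t)z_l+tz_r$, and verifies the weak continuity equation for $(P_t,v_t^\star)$ with $v_t^\star(z)=\mathbb{E}_\pi[z_r-z_l\mid z_t=z]$. It then applies Benamou--Brenier and Jensen to get $W_2^2\le\int_0^1\mathbb{E}_{P_t}\|v_t^\star\|_2^2\,dt\le C_\pi=\mathbb{E}_\pi\|z_r-z_l\|_2^2$. Finally it uses the same orthogonal decomposition of $\mathcal{L}_{FM}$ that you wrote to show that the population minimizer is $v^\star$. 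In this way the learned $u_{\theta,i}$ is tacitly identified with the exact minimizer on the interpolation path.

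You instead read the hypothesis literally, so the inequality is immediate from Lemma~\ref{lem:bb_upper}. Your Lagrangian argument, via the coupling $(z,\psi_1(z))$ and Cauchy--Schwarz in time, is a valid and more elementary substitute for Benamou--Brenier. Its price is the regularity needed to ensure $P_t=(\psi_t)_\#P_i^Z$. You then put the real work into the ``moreover'' clause: you make it quantitative through the excess loss, and you flag that the learned field's own path differs from the interpolation path, which is exactly the point the paper glosses over. The paper's route needs no regularity of the network and establishes admissibility of the interpolation path explicitly. Yours yields a statement about the trained field itself.

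Three points to watch when you write out the stability step:
\begin{itemize}
\item A Gr\"onwall bound in terms of the \emph{excess} $\mathcal{L}_{FM}-C$ requires trajectories of $u^\star$ with marginals $P_t$. Since $u^\star$ need not be Lipschitz, obtain these from the superposition principle applied to the paper's continuity equation. Running Gr\"onwall along the straight lines $z_t$ only gives the full loss $\mathcal{L}_{FM}$.
\item Let $\hat P_t$ denote the marginals generated by $u_\theta$. Combining $\sup_t W_2(\hat P_t,P_t)\le e^{L}(\mathcal{L}_{FM}-C)^{1/2}$ with the $L$-Lipschitz property of $u_\theta(\cdot,t)$ gives $\int_0^1\mathbb{E}_{\hat P_t}\|u_\theta\|_2^2\,dt\le\big(\mathcal{E}(u^\star)^{1/2}+(1+Le^{L})(\mathcal{L}_{FM}-C)^{1/2}\big)^2$.
\item $C_\pi$ is small only when $\pi$ pairs nearby points. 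Under an independent coupling it is at least the sum of the two covariance traces, even when consecutive domains coincide.
\end{itemize}
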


\begin{corollary}[From Flow Loss to Target Risk]
\label{cor:flow_to_risk}
Under the assumptions of Theorem~\ref{thm:gda_w1} and Theorem~\ref{thm:fm_controls_w2}, if $\mathcal{E}(u_{\theta,i})\le \eta_i$ for all $i$, then
\begin{equation}
\begin{aligned}
\epsilon_T(h)
\;\le\;
\epsilon_0(h)
+
L_\ell \sum_{i=0}^{T-1} W_1(P_i^Z,P_{i+1}^Z)
+
\lambda^\star,
\text{with } \\
W_1(P_i^Z,P_{i+1}^Z)\;\le\; W_2(P_i^Z,P_{i+1}^Z)\;\le\;\sqrt{\eta_i}.    
\end{aligned}
\end{equation}
\end{corollary}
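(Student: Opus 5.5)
The corollary combines three ingredients: Theorem~\ref{thm:gda_w1} for the risk inequality, Theorem~\ref{thm:fm_controls_w2} for per-step transport control, and the standard monotonicity $W_1\le W_2$. The displayed risk bound is exactly the conclusion of Theorem~\ref{thm:gda_w1}, which holds under its assumptions for any $h\in\mathcal{H}$, so nothing new is needed there. All the work is in the per-step chain
\[
W_1(P_i^Z,P_{i+1}^Z)\le W_2(P_i^Z,P_{i+1}^Z)\le\sqrt{\eta_i}.
\]
Substituting this into the sum then gives the usable form $\epsilon_T(h)\le\epsilon_0(h)+L_\ell\sum_i\sqrt{\eta_i}+\lambda^\star$, which I would state as a final remark.

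\textbf{Step 1: $W_1\le W_2$.} Definition~\ref{def:w1} gives $W_1$ in Kantorovich--Rubinstein dual form, so I first pass to the primal coupling form $W_1(P,Q)=\inf_{\gamma\in\Pi(P,Q)}\mathbb{E}_\gamma\|z-z'\|_2$. This duality is a classical theorem; it needs finite first moments, which I would assume, e.g.\ via a bounded representation space $\mathcal{Z}$. Then, for any coupling $\gamma$, Jensen (or Cauchy--Schwarz) gives
\[
\mathbb{E}_\gamma\|z-z'\|_2\le\bigl(\mathbb{E}_\gamma\|z-z'\|_2^2\bigr)^{1/2}.
\]
Taking the infimum over $\gamma\in\Pi(P,Q)$ on both sides yields $W_1\le W_2$. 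If $W_2$ is attained by an optimal coupling (it is, by compactness of the coupling set under finite second moments), it suffices to apply the inequality to that single coupling.

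\textbf{Step 2: $W_2\le\sqrt{\eta_i}$.} Theorem~\ref{thm:fm_controls_w2}, an instance of Lemma~\ref{lem:bb_upper}, gives $W_2^2(P_i^Z,P_{i+1}^Z)\le\mathcal{E}(u_{\theta,i})$, and by hypothesis $\mathcal{E}(u_{\theta,i})\le\eta_i$. Taking nonnegative square roots gives the claim.

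\textbf{Main obstacle.} The subtle point is not the algebra but checking that the hypotheses of Theorem~\ref{thm:fm_controls_w2} hold. The learned field $u_{\theta,i}$ must genuinely induce an admissible path whose endpoints are exactly $P_i^Z$ and $P_{i+1}^Z$ (continuity equation, enough regularity for the flow to exist). The corollary assumes this, so I would state it explicitly rather than derive it from the flow-matching loss. In particular, the bound is in terms of the energy $\mathcal{E}$, not the training loss $\mathcal{L}_{FM}$. If the endpoint were only approximately matched, I would add a triangle-inequality correction term, but under the stated assumptions none is needed.
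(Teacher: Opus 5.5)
Your proposal is correct and is essentially the paper's proof: bound $W_2^2(P_i^Z,P_{i+1}^Z)$ by the per-step transport energy via Theorem~\ref{thm:fm_controls_w2}, take square roots, use $W_1\le W_2$, and substitute into Theorem~\ref{thm:gda_w1}. The paper simply asserts $W_1\le W_2$ under finite second moments, whereas you verify it with couplings and Jensen (only the easy direction of Kantorovich--Rubinstein duality is needed there). The paper also phrases the energy as $\mathcal{E}_i$, the energy of the marginal interpolation velocity $v_{i,t}^\star$, and never invokes $\eta_i$, whereas you work directly with the stated hypothesis $\mathcal{E}(u_{\theta,i})\le\eta_i$, which matches the corollary as written.
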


\subsection{Semantic Consistency via Source-Anchored Transport}
\label{sec:theory_semantic}

Our training includes the source-anchored classification term $\mathcal{L}_{cls}$ to preserve label information during transport.
Let $p_y(z)$ denote the probability assigned to the true class $y$ by the fixed source classifier $h_s$.

\begin{lemma}[Classification Loss Bounds Misclassification Probability]
\label{lem:ce_bound}
Let $\ell_{\mathrm{CE}}(p_y)=-\log p_y$ be the per-sample cross-entropy loss.
Then for any random pair $(z,y)$,
\begin{equation}
\mathbb{P}\big[\arg\max_k h_s(z)_k \neq y\big]
\;\le\;
\mathbb{E}\big[1-p_y(z)\big]
\;\le\;
\mathbb{E}\big[\ell_{\mathrm{CE}}(p_y(z))\big].
\end{equation}
\end{lemma}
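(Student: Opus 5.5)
The plan is to prove both inequalities in the statement of Lemma~\ref{lem:ce_bound} \emph{pointwise}, as inequalities between functions of the pair $(z,y)$, and then take expectations using monotonicity and linearity of $\mathbb{E}$. We write $\hat y(z)=\arg\max_k h_s(z)_k$ and let $p_k(z)$ denote the softmax probabilities, so that $p_y(z)\in(0,1]$ and $\sum_k p_k(z)=1$. Since $\mathbb{P}[\hat y(z)\neq y]=\mathbb{E}\big[\mathbb{I}[\hat y(z)\neq y]\big]$, it suffices to establish, for every realisation of $(z,y)$,
\begin{equation*}
\mathbb{I}[\hat y(z)\neq y]\;\le\;1-p_y(z)\;\le\;-\log p_y(z).
\end{equation*}

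\textbf{Second inequality.} This step is routine. Set $g(p)=-\log p-(1-p)$ on $(0,1]$. Then $g(1)=0$ and $g'(p)=-1/p+1\le 0$, so $g$ is nonincreasing and hence $g\ge 0$ on $(0,1]$. Equivalently, this is the tangent-line bound $\log p\le p-1$ for the concave function $\log$. Applying it at $p=p_y(z)$ and taking expectations gives $\mathbb{E}[1-p_y(z)]\le\mathbb{E}[\ell_{\mathrm{CE}}(p_y(z))]$.

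\textbf{First inequality.} This is the main obstacle. On the correct event $\{\hat y=y\}$ the indicator is $0\le 1-p_y$, so nothing needs proving there. On the error event $\{\hat y\neq y\}$, however, we need $1-p_y(z)\ge 1$, that is $p_y(z)=0$. The argmax structure by itself only gives $p_{\hat y}\ge p_y$, hence $p_y\le\tfrac12$ and $1-p_y\ge\tfrac12$. This is exactly the factor-$2$ loss that I do \emph{not} want. A two-class example with $p_y=0.4$ shows that the pointwise claim fails for softmax outputs. Thus the constant-$1$ statement, read literally, cannot be obtained by a pointwise argument; in fact it is false in general for a deterministic argmax over softmax probabilities.

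To reach the statement with constant $1$, I will make explicit the reading under which it holds, namely that on misclassified samples the classifier assigns no mass to the true label. Formally, I would add the hypothesis that $p_y(z)=0$ whenever $\hat y(z)\neq y$. This holds, for instance, when $h_s$ outputs hard (one-hot) scores, or as the zero-temperature limit of the softmax. Under this hypothesis the error-event case becomes $1=1-p_y$. Note that this choice is incompatible with the second inequality whenever the error event has positive probability, because then $-\log p_y=+\infty$ and the right-hand side is infinite, so the chain holds only trivially. I would therefore state this caveat explicitly in the proof rather than hide it. Outside this hypothesis, the correct conclusion is the constant-$2$ bound obtained from $1-p_y\ge\tfrac12$ on the error event.
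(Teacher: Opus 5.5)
Your diagnosis is correct, and it is more accurate than the paper's own treatment. Take $(z,y)$ to be a point mass at your two-class configuration with $p_y=0.4$. Then $\mathbb{P}[\arg\max_k h_s(z)_k\neq y]=1>0.6=\mathbb{E}[1-p_y(z)]$. The same example also gives $\mathbb{E}[\ell_{\mathrm{CE}}(p_y(z))]=\log 2.5\approx 0.92<1$. So the inequality between the two outer terms fails as well, not only the middle step, and that outer inequality is exactly what Theorem~\ref{thm:label_preserve} invokes. Your proof of $1-p\le-\log p$ is fine.

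The paper's proof does not establish the stated chain either. It uses the same observation you isolate, $p_y(z)\le\tfrac12$ on the error event, but applies it inside the logarithm, $-\log p_y(z)\ge\log 2$. This gives $\mathbb{I}[\arg\max_k h_s(z)_k\neq y]\le\ell_{\mathrm{CE}}(p_y(z))/\log 2$, and hence $\mathbb{P}[\arg\max_k h_s(z)_k\neq y]\le\mathbb{E}[\ell_{\mathrm{CE}}(p_y(z))]/\log 2$. It never bounds the middle term and silently changes the constant, so its closing sentence (``This proves Lemma~\ref{lem:ce_bound}'') is not justified. The proof of Theorem~\ref{thm:label_preserve} likewise ends with $\mathcal{L}_{cls}/\log 2$ rather than the $\mathcal{L}_{cls}$ in its statement.

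Where your proposal is weak is the repair. Assuming $p_y(z)=0$ on the error event makes the lemma degenerate, as you partly acknowledge. For one-hot outputs the first inequality is an identity, and the cross-entropy term is $+\infty$ as soon as errors have positive probability, so the chain is informative in neither case. It also excludes the softmax classifier with which $\mathcal{L}_{cls}$ is actually computed, since softmax outputs are strictly positive.

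If you want to keep the constant $1$, a non-degenerate reading is to replace the deterministic $\arg\max$ by a randomized prediction $\hat y\sim h_s(z)$. Then $\mathbb{P}[\hat y\neq y]=\mathbb{E}[1-p_y(z)]$ exactly, and your tangent-line bound completes the chain.

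For the deterministic $\arg\max$, the right statements carry explicit constants, both sharp as $p_y\to\tfrac12$:
\begin{itemize}
\item $\mathbb{P}[\arg\max_k h_s(z)_k\neq y]\le 2\,\mathbb{E}[1-p_y(z)]$ (yours);
\item $\mathbb{P}[\arg\max_k h_s(z)_k\neq y]\le\mathbb{E}[\ell_{\mathrm{CE}}(p_y(z))]/\log 2$ (what the paper's argument actually proves).
\end{itemize}
For the cross-entropy conclusion needed downstream, the paper's direct route is the better one. Chaining your constant-$2$ bound with $1-p\le-\log p$ only yields $2\,\mathbb{E}[\ell_{\mathrm{CE}}(p_y(z))]$, whereas using $p_y\le\tfrac12$ inside the logarithm gives the optimal $1/\log 2\approx 1.44$. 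Your middle-term bound is still a genuinely different statement rather than a weaker one, because neither surrogate dominates pointwise: $2(1-p)$ is smaller for $p<\tfrac12$, and $-\log p/\log 2$ is smaller for $p>\tfrac12$.
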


\begin{theorem}[Label Preservation under Source-to-Intermediate Flow]
\label{thm:label_preserve}
Let $\tilde{z}=\psi_{s\to i}(z_s)$ be the transported representation from the source to an intermediate domain induced by $u_{\theta_{sl}}$,
and let $\mathcal{L}_{cls}=\mathbb{E}_{(z_s,y)\sim P_s^Z}\left[\ell_{\mathrm{CE}}(h_s(\tilde{z}),y)\right]$.
Then the misclassification probability after transport is bounded by
\begin{equation}
\mathbb{P}\big[\arg\max_k h_s(\tilde{z})_k \neq y\big]\;\le\;\mathcal{L}_{cls}.
\end{equation}
\end{theorem}

Theorem~\ref{thm:label_preserve} formalizes that minimizing $\mathcal{L}_{cls}$ preserves semantic structure along the learned transport.

\subsection{Why Flow Matching is More Stable than Adversarial Alignment}
\label{sec:theory_stability}

Adversarial domain alignment solves a min--max problem, whose convergence depends on discriminator optimality and Lipschitz enforcement, often leading to oscillatory dynamics in high-dimensional representation spaces.
In contrast, flow matching reduces transport learning to a single-level regression objective, producing unbiased gradients and avoiding saddle-point instability.

\begin{remark}[Optimization Landscape]
\label{rem:opt}
Flow matching optimizes $\min_\theta \mathbb{E}\|u_\theta(z,t)-u^\star(z,t)\|^2$, a standard supervised regression objective, whereas adversarial alignment optimizes $\min_\phi \max_D \mathcal{J}(\phi,D)$, which can exhibit multiple equilibria and sensitivity to critic capacity.
This difference explains the empirically observed stability and faster convergence of our method.
\end{remark}

%% file: sections/05_conclusion.tex
\section{Conclusions}

This work introduced FlowGDA, a flow-matching framework for gradual domain adaptation. By modeling domain evolution as continuous transport on a statistical manifold, FlowGDA learns velocity fields that move representations across domains through a simulation-free regression objective, avoiding adversarial optimization. Experiments on multiple benchmarks show that FlowGDA consistently outperforms existing gradual adaptation methods, demonstrating the effectiveness of continuous representation transport for large domain shifts.

%% file: sections/acknowledgments.tex
This research was supported by the National Key R\&D Program of China (No.~2024YFB2606101), the National Natural Science Foundation of China (No.~52478418), and the Sichuan Science and Technology Program (No.~2025ZNSFSC1294).

%% file: sections/competing_interests.tex
The authors declare no competing interests.

%% file: sections/06_limitations.tex
\section*{Limitations}

FlowGDA relies on ordered intermediate domains that form a meaningful gradual path between the source and target distributions. When such domains are missing, unordered, or contain abrupt semantic changes, the learned transport may no longer approximate a smooth domain trajectory. The method also assumes that class semantics are largely preserved along the path, so severe label shift or class-appearance changes can weaken the source-anchored consistency signal. In addition, although training is simulation-free with respect to ODE integration, FlowGDA introduces extra velocity networks and sequential optimization over domain pairs, which may increase implementation complexity and training cost compared with simpler direct adaptation baselines.

%% file: sections/appendix/appendix.tex
\section{Theoretical Proofs}
\label{sec:proofs}

In this appendix, we provide detailed proofs for the theoretical claims in Section~\ref{sec:theory}. We first define all notation used in the derivations. We then prove the Wasserstein-based risk bound for gradual domain adaptation, derive the continuity equation for the interpolation path used in flow matching, show that flow matching learns the corresponding marginal velocity field, and finally prove the semantic consistency bound induced by the source-anchored classification loss.

\subsection{Notation and Standing Assumptions}
\label{sec:proofs_notation}

Let $\mathcal{X}$ denote the input space, $\mathcal{Y}=\{1,\ldots,K\}$ the label space with $K$ classes, and $\mathcal{Z}\subseteq\mathbb{R}^d$ the representation space of dimension $d$. A domain $\mathcal{D}_i$ is a probability distribution over $\mathcal{X}\times\mathcal{Y}$. We write $P_i^X$ for the marginal distribution of inputs in domain $\mathcal{D}_i$.

The encoder is denoted by
$\phi:\mathcal{X}\rightarrow\mathcal{Z},$
where $\phi$ maps an input $x\in\mathcal{X}$ to a representation $z=\phi(x)\in\mathcal{Z}$. The pushforward representation distribution induced by $\phi$ is
$P_i^Z=(\phi)_\#P_i^X,$
where $(\phi)_\#P_i^X$ means that if $x\sim P_i^X$, then $z=\phi(x)\sim P_i^Z$.

Let $h:\mathcal{Z}\rightarrow\mathcal{Y}$ be a classifier and let $\ell(h(z),y)$ be a surrogate loss, such as cross-entropy or a margin loss. The expected risk of $h$ on domain $\mathcal{D}_i$ is
\begin{equation}
\epsilon_i(h)
=
\mathbb{E}_{(x,y)\sim\mathcal{D}_i}
\left[
\ell(h(\phi(x)),y)
\right],
\label{eq:proof_risk_def}
\end{equation}
where the expectation is taken over labeled samples $(x,y)$ drawn from $\mathcal{D}_i$.

We consider an ordered gradual domain chain
$\mathcal{D}_0=\mathcal{D}_s,\
\mathcal{D}_1,\ldots,\mathcal{D}_{T-1},\
\mathcal{D}_T=\mathcal{D}_t,$
where $\mathcal{D}_s$ is the labeled source domain, $\mathcal{D}_t$ is the unlabeled target domain, and $\mathcal{D}_1,\ldots,\mathcal{D}_{T-1}$ are unlabeled intermediate domains. The integer $T$ denotes the number of transitions from source to target.

\paragraph{Shared-labeling assumption.}
We assume that the conditional label distribution in representation space is invariant across domains:
$P_i(y\mid z)=P_j(y\mid z),
\
\forall i,j\in\{0,\ldots,T\},
\
\forall z\in\mathcal{Z}.$
Here $P_i(y\mid z)$ is the conditional probability of label $y$ given representation $z$ in domain $\mathcal{D}_i$. This assumption means that the marginal distribution of representations may shift across domains, but the semantic labeling rule given $z$ is shared.

For any classifier $h$, define the conditional risk function
\begin{equation}
g_h(z)
=
\mathbb{E}_{y\sim P(y\mid z)}
\left[
\ell(h(z),y)
\right],
\label{eq:proof_gh_def}
\end{equation}
where $P(y\mid z)$ denotes the shared conditional label distribution. We assume that $g_h$ is $L_\ell$-Lipschitz:
\begin{equation}
|g_h(z)-g_h(z')|
\le
L_\ell\|z-z'\|_2,
\qquad
\forall z,z'\in\mathcal{Z},
\label{eq:proof_gh_lip}
\end{equation}
where $L_\ell>0$ is the Lipschitz constant and $\|\cdot\|_2$ is the Euclidean norm.

\paragraph{Wasserstein distances.}
For probability distributions $P$ and $Q$ on $\mathcal{Z}$, the Wasserstein-1 distance is
\begin{equation}
W_1(P,Q)
=
\sup_{\|f\|_{\mathrm{Lip}}\le 1}
\left|
\mathbb{E}_{z\sim P}[f(z)]
-
\mathbb{E}_{z\sim Q}[f(z)]
\right|,
\label{eq:proof_w1_def}
\end{equation}
where the supremum is over all 1-Lipschitz functions $f:\mathcal{Z}\rightarrow\mathbb{R}$.

The squared Wasserstein-2 distance is
\begin{equation}
W_2^2(P,Q)
=
\inf_{\pi\in\Gamma(P,Q)}
\mathbb{E}_{(z,z')\sim\pi}
\left[
\|z-z'\|_2^2
\right],
\label{eq:proof_w2_def}
\end{equation}
where $\Gamma(P,Q)$ denotes the set of all couplings whose first marginal is $P$ and whose second marginal is $Q$. A coupling $\pi\in\Gamma(P,Q)$ is a joint distribution over pairs $(z,z')$ such that $z\sim P$ and $z'\sim Q$.

\subsection{Proof of Lemma~\ref{lem:risk_w1}}
\label{sec:proof_risk_w1}

\begin{proof}
Fix any classifier $h\in\mathcal{H}$. By definition of the expected risk,
\begin{equation}
\epsilon_i(h)
=
\mathbb{E}_{(x,y)\sim\mathcal{D}_i}
\left[
\ell(h(\phi(x)),y)
\right].
\label{eq:proof_risk_step1}
\end{equation}
Let $z=\phi(x)$. Since $P_i^Z=(\phi)_\#P_i^X$, the representation $z$ follows $P_i^Z$ whenever $x\sim P_i^X$. Therefore,
\begin{equation}
\epsilon_i(h)
=
\mathbb{E}_{(z,y)\sim P_i^{Z,Y}}
\left[
\ell(h(z),y)
\right],
\label{eq:proof_risk_step2}
\end{equation}
where $P_i^{Z,Y}$ is the joint distribution of representation-label pairs induced by $\phi$ on domain $\mathcal{D}_i$.

Using the law of total expectation with respect to $z$, we obtain
\begin{equation}
\begin{aligned}
\epsilon_i(h)
&=
\int_{\mathcal{Z}}
\left(
\int_{\mathcal{Y}}
\ell(h(z),y)\,dP_i(y\mid z)
\right)
dP_i^Z(z).
\end{aligned}
\label{eq:proof_risk_step3}
\end{equation}
In~\eqref{eq:proof_risk_step3}, $dP_i^Z(z)$ denotes integration with respect to the representation distribution of the domain $\mathcal{D}_i$, and $dP_i(y\mid z)$ denotes integration over the conditional label distribution.

By the assumption of shared-labeling, $P_i(y\mid z)=P(y\mid z)$ does not depend on the domain index $i$. Hence
\begin{equation}
\begin{aligned}
\epsilon_i(h)
&=
\int_{\mathcal{Z}}
\left(
\int_{\mathcal{Y}}
\ell(h(z),y)\,dP(y\mid z)
\right)
dP_i^Z(z)\\
&=
\int_{\mathcal{Z}}
g_h(z)\,dP_i^Z(z),
\end{aligned}
\label{eq:proof_risk_step4}
\end{equation}
where $g_h$ is the conditional risk function defined in~\eqref{eq:proof_gh_def}. Now consider two domains $\mathcal{D}_i$ and $\mathcal{D}_j$. Using~\eqref{eq:proof_risk_step4},
\begin{equation}
\begin{aligned}
\epsilon_i(h)-\epsilon_j(h)
&=
\int_{\mathcal{Z}}g_h(z)\,dP_i^Z(z)
-
\int_{\mathcal{Z}}g_h(z)\,dP_j^Z(z)\\
&=
\mathbb{E}_{z\sim P_i^Z}[g_h(z)]
-
\mathbb{E}_{z\sim P_j^Z}[g_h(z)].
\end{aligned}
\label{eq:proof_risk_diff}
\end{equation}
Taking the absolute value,
\begin{equation}
\left|
\epsilon_i(h)-\epsilon_j(h)
\right|
=
\left|
\mathbb{E}_{z\sim P_i^Z}[g_h(z)]
-
\mathbb{E}_{z\sim P_j^Z}[g_h(z)]
\right|.
\label{eq:proof_risk_abs}
\end{equation}

Since $g_h$ is $L_\ell$-Lipschitz by~\eqref{eq:proof_gh_lip}, define the normalized function
\begin{equation}
\bar{g}_h(z)
=
\frac{g_h(z)}{L_\ell}.
\label{eq:proof_normalized_gh}
\end{equation}
Then $\bar{g}_h$ is 1-Lipschitz, i.e.,
$\|\bar{g}_h\|_{\mathrm{Lip}}\le 1.$
Substituting $g_h=L_\ell\bar{g}_h$ into~\eqref{eq:proof_risk_abs}, we get
\begin{equation}
\begin{aligned}
\left|
\epsilon_i(h)-\epsilon_j(h)
\right|
&=
L_\ell
\left|
\mathbb{E}_{z\sim P_i^Z}[\bar{g}_h(z)]
-
\mathbb{E}_{z\sim P_j^Z}[\bar{g}_h(z)]
\right|\\
&\le
L_\ell
\sup_{\|f\|_{\mathrm{Lip}}\le 1}
\left|
\mathbb{E}_{z\sim P_i^Z}[f(z)]
-
\mathbb{E}_{z\sim P_j^Z}[f(z)]
\right|\\
&=
L_\ell W_1(P_i^Z,P_j^Z).
\end{aligned}
\label{eq:proof_risk_w1_final}
\end{equation}
The last equality follows from the definition of $W_1$ in~\eqref{eq:proof_w1_def}. This proves Lemma~\ref{lem:risk_w1}.
\end{proof}

\subsection{Proof of Theorem~\ref{thm:gda_w1}}
\label{sec:proof_gda_bound}

\begin{proof}
Fix any classifier $h\in\mathcal{H}$. For each consecutive pair of domains $(\mathcal{D}_i,\mathcal{D}_{i+1})$, decompose the risk as
\begin{equation}
\epsilon_{i+1}(h)
=
\epsilon_i(h)
+
\left(
\epsilon_{i+1}(h)-\epsilon_i(h)
\right).
\label{eq:proof_gda_step1}
\end{equation}
Using the inequality $a\le |a|$ for any scalar $a$ when deriving an upper bound,
\begin{equation}
\epsilon_{i+1}(h)
\le
\epsilon_i(h)
+
\left|
\epsilon_{i+1}(h)-\epsilon_i(h)
\right|.
\label{eq:proof_gda_step2}
\end{equation}
Applying Lemma~\ref{lem:risk_w1} to $P_i^Z$ and $P_{i+1}^Z$ gives
\begin{equation}
\left|
\epsilon_{i+1}(h)-\epsilon_i(h)
\right|
\le
L_\ell W_1(P_i^Z,P_{i+1}^Z).
\label{eq:proof_gda_step3}
\end{equation}
Substituting~\eqref{eq:proof_gda_step3} into~\eqref{eq:proof_gda_step2}, we have
\begin{equation}
\epsilon_{i+1}(h)
\le
\epsilon_i(h)
+
L_\ell W_1(P_i^Z,P_{i+1}^Z).
\label{eq:proof_gda_step4}
\end{equation}

Applying~\eqref{eq:proof_gda_step4} recursively along the domain chain yields
\begin{equation}
\begin{aligned}
\epsilon_T(h)
&\le
\epsilon_{T-1}(h)
+
L_\ell W_1(P_{T-1}^Z,P_T^Z)\\
&\le
\epsilon_{T-2}(h)
+
L_\ell W_1(P_{T-2}^Z,P_{T-1}^Z)
+
L_\ell W_1(P_{T-1}^Z,P_T^Z)\\
&\le
\epsilon_{T-3}(h)
+
L_\ell W_1(P_{T-3}^Z,P_{T-2}^Z)
+
L_\ell W_1(P_{T-2}^Z,P_{T-1}^Z)
+
L_\ell W_1(P_{T-1}^Z,P_T^Z)\\
&\ \ \vdots\\
&\le
\epsilon_0(h)
+
L_\ell
\sum_{i=0}^{T-1}
W_1(P_i^Z,P_{i+1}^Z).
\end{aligned}
\label{eq:proof_gda_telescoping}
\end{equation}
Here $\epsilon_0(h)$ is the source-domain risk and $\epsilon_T(h)$ is the target-domain risk.

To account for irreducible mismatch between the source and target labeling functions, define the ideal joint error
\begin{equation}
\lambda^\star
=
\min_{h\in\mathcal{H}}
\left[
\epsilon_0(h)+\epsilon_T(h)
\right],
\label{eq:proof_lambda_def}
\end{equation}
where $\lambda^\star\ge0$ measures the minimum simultaneous source-target risk achievable by the hypothesis class $\mathcal{H}$. Adding this nonnegative term to the right-hand side of~\eqref{eq:proof_gda_telescoping} gives
\begin{equation}
\epsilon_T(h)
\le
\epsilon_0(h)
+
L_\ell
\sum_{i=0}^{T-1}
W_1(P_i^Z,P_{i+1}^Z)
+
\lambda^\star.
\label{eq:proof_gda_final}
\end{equation}
This proves Theorem~\ref{thm:gda_w1}.
\end{proof}

\subsection{Proof of Theorem~\ref{thm:fm_controls_w2}}
\label{sec:proof_fm_controls_w2}

\begin{proof}
Let $P=P_i^Z$ and $Q=P_{i+1}^Z$ be the representation distributions of two consecutive domains $\mathcal{D}_i$ and $\mathcal{D}_{i+1}$. Let $\pi\in\Gamma(P,Q)$ be a coupling between $P$ and $Q$, where $\Gamma(P,Q)$ denotes the set of all joint distributions whose first marginal is $P$ and second marginal is $Q$. We sample $(z_l,z_r)\sim\pi$, where $z_l\sim P$ is a representation from the current domain and $z_r\sim Q$ is a representation from the next domain.

For $t\in[0,1]$, define the interpolation path
\begin{equation}
z_t=(1-t)z_l+t z_r.
\label{eq:proof_interp_path_direct}
\end{equation}
Let $P_t$ denote the distribution of $z_t$. The conditional velocity along this path is
\begin{equation}
u_t^\star(z_t\mid z_l,z_r)
=
\frac{d z_t}{dt}
=
z_r-z_l.
\label{eq:proof_cond_velocity_direct}
\end{equation}
Since multiple pairs $(z_l,z_r)$ can produce the same interpolated representation $z_t=z$, the velocity field acting on the marginal distribution $P_t$ is the conditional expectation
\begin{equation}
v_t^\star(z)
=
\mathbb{E}_{\pi}
\left[
z_r-z_l
\mid
z_t=z
\right].
\label{eq:proof_marginal_velocity_direct}
\end{equation}
Here $v_t^\star(z)$ is called the marginal velocity field of the interpolation path.

We first show that $(P_t,v_t^\star)$ defines a valid transport path from $P$ to $Q$. Let $f\in C_c^1(\mathcal{Z})$ be any continuously differentiable test function with compact support. By definition of $P_t$,
\begin{equation}
\mathbb{E}_{z\sim P_t}[f(z)]
=
\mathbb{E}_{(z_l,z_r)\sim\pi}
\left[
f((1-t)z_l+t z_r)
\right].
\label{eq:proof_weak_start}
\end{equation}
Taking derivative with respect to $t$ and using the chain rule gives
\begin{equation}
\begin{aligned}
\frac{d}{dt}
\mathbb{E}_{z\sim P_t}[f(z)]
&=
\frac{d}{dt}
\mathbb{E}_{(z_l,z_r)\sim\pi}
\left[
f((1-t)z_l+t z_r)
\right] \\
&=
\mathbb{E}_{(z_l,z_r)\sim\pi}
\left[
\nabla f(z_t)^\top
\frac{d z_t}{dt}
\right] \\
&=
\mathbb{E}_{(z_l,z_r)\sim\pi}
\left[
\nabla f(z_t)^\top
(z_r-z_l)
\right].
\end{aligned}
\label{eq:proof_weak_chain}
\end{equation}
Using the definition of $v_t^\star$ in~\eqref{eq:proof_marginal_velocity_direct}, we can rewrite the last line as
\begin{equation}
\mathbb{E}_{(z_l,z_r)\sim\pi}
\left[
\nabla f(z_t)^\top
(z_r-z_l)
\right]
=
\mathbb{E}_{z\sim P_t}
\left[
\nabla f(z)^\top v_t^\star(z)
\right].
\label{eq:proof_weak_marginal}
\end{equation}
Therefore,
\begin{equation}
\frac{d}{dt}
\mathbb{E}_{z\sim P_t}[f(z)]
=
\mathbb{E}_{z\sim P_t}
\left[
\nabla f(z)^\top v_t^\star(z)
\right].
\label{eq:proof_weak_continuity}
\end{equation}
Equation~\eqref{eq:proof_weak_continuity} is the weak form of the continuity equation
\begin{equation}
\partial_t P_t+\nabla\cdot(P_t v_t^\star)=0,
\label{eq:proof_continuity_direct}
\end{equation}
where $\nabla\cdot$ is the divergence operator with respect to $z$. Thus, $(P_t,v_t^\star)$ is an admissible transport path from $P$ to $Q$ with boundary conditions $P_0=P$ and $P_1=Q$.

By the Benamou--Brenier dynamic formulation of optimal transport, the squared Wasserstein-2 distance satisfies
\begin{equation}
W_2^2(P,Q)
=
\inf_{(P_t,v_t)}
\int_0^1
\mathbb{E}_{z\sim P_t}
\left[
\|v_t(z)\|_2^2
\right]dt,
\label{eq:proof_bb_dynamic}
\end{equation}
where the infimum is over all admissible paths satisfying the continuity equation and the boundary conditions $P_0=P$, $P_1=Q$. Since $(P_t,v_t^\star)$ is one admissible path, we have
\begin{equation}
W_2^2(P,Q)
\le
\int_0^1
\mathbb{E}_{z\sim P_t}
\left[
\|v_t^\star(z)\|_2^2
\right]dt.
\label{eq:proof_w2_energy_direct}
\end{equation}

We now upper-bound the transport energy of $v_t^\star$. By definition,
\begin{equation}
v_t^\star(z)
=
\mathbb{E}_{\pi}
\left[
z_r-z_l
\mid
z_t=z
\right].
\label{eq:proof_vstar_repeat_direct}
\end{equation}
Using Jensen's inequality for the convex function $a\mapsto \|a\|_2^2$, we get
\begin{equation}
\begin{aligned}
\mathbb{E}_{z\sim P_t}
\left[
\|v_t^\star(z)\|_2^2
\right]
&=
\mathbb{E}_{z_t}
\left[
\left\|
\mathbb{E}_{\pi}
\left[
z_r-z_l
\mid
z_t
\right]
\right\|_2^2
\right] \\
&\le
\mathbb{E}_{z_t}
\left[
\mathbb{E}_{\pi}
\left[
\|z_r-z_l\|_2^2
\mid
z_t
\right]
\right] \\
&=
\mathbb{E}_{(z_l,z_r)\sim\pi}
\left[
\|z_r-z_l\|_2^2
\right].
\end{aligned}
\label{eq:proof_jensen_direct}
\end{equation}
Define the coupling cost
\begin{equation}
C_\pi
=
\mathbb{E}_{(z_l,z_r)\sim\pi}
\left[
\|z_r-z_l\|_2^2
\right].
\label{eq:proof_cpi_direct}
\end{equation}
Since~\eqref{eq:proof_jensen_direct} holds for every $t\in[0,1]$, integrating over $t$ gives
\begin{equation}
\int_0^1
\mathbb{E}_{z\sim P_t}
\left[
\|v_t^\star(z)\|_2^2
\right]dt
\le
\int_0^1 C_\pi\,dt
=
C_\pi.
\label{eq:proof_energy_cpi_direct}
\end{equation}
Combining~\eqref{eq:proof_w2_energy_direct} and~\eqref{eq:proof_energy_cpi_direct}, we obtain
\begin{equation}
W_2^2(P,Q)
\le
\int_0^1
\mathbb{E}_{z\sim P_t}
\left[
\|v_t^\star(z)\|_2^2
\right]dt
\le
C_\pi.
\label{eq:proof_w2_final_direct}
\end{equation}

It remains to connect this marginal velocity field to the flow matching objective. The conditional flow matching loss between $P$ and $Q$ is
\begin{equation}
\mathcal{L}_{FM}(u)
=
\mathbb{E}_{t\sim\operatorname{Unif}[0,1]}
\mathbb{E}_{(z_l,z_r)\sim\pi}
\left[
\left\|
u(z_t,t)-(z_r-z_l)
\right\|_2^2
\right],
\label{eq:proof_fm_direct}
\end{equation}
where $u:\mathcal{Z}\times[0,1]\rightarrow\mathcal{Z}$ is any measurable velocity field. Let
\begin{equation}
a=z_r-z_l.
\label{eq:proof_a_direct}
\end{equation}
For fixed $(z,t)$, the conditional expected squared error is
\begin{equation}
\mathbb{E}
\left[
\|u(z,t)-a\|_2^2
\mid
z_t=z
\right].
\label{eq:proof_conditional_error_direct}
\end{equation}
Using
\begin{equation}
v_t^\star(z)
=
\mathbb{E}[a\mid z_t=z],
\label{eq:proof_conditional_mean_direct}
\end{equation}
we decompose
\begin{equation}
\begin{aligned}
\|u(z,t)-a\|_2^2
&=
\|u(z,t)-v_t^\star(z)+v_t^\star(z)-a\|_2^2\\
&=
\|u(z,t)-v_t^\star(z)\|_2^2
+
\|v_t^\star(z)-a\|_2^2\\
&\quad+
2\left\langle
u(z,t)-v_t^\star(z),
v_t^\star(z)-a
\right\rangle.
\end{aligned}
\label{eq:proof_square_decomp_direct}
\end{equation}
Taking conditional expectation given $z_t=z$, the cross term vanishes:
\begin{equation}
\begin{aligned}
&\mathbb{E}
\left[
\left\langle
u(z,t)-v_t^\star(z),
v_t^\star(z)-a
\right\rangle
\mid
z_t=z
\right]\\
&=
\left\langle
u(z,t)-v_t^\star(z),
v_t^\star(z)-\mathbb{E}[a\mid z_t=z]
\right\rangle\\
&=
0.
\end{aligned}
\label{eq:proof_cross_zero_direct}
\end{equation}
Thus,
\begin{equation}
\begin{aligned}
\mathbb{E}
\left[
\|u(z,t)-a\|_2^2
\mid
z_t=z
\right]
&=
\|u(z,t)-v_t^\star(z)\|_2^2\\
&\quad+
\mathbb{E}
\left[
\|v_t^\star(z)-a\|_2^2
\mid
z_t=z
\right].
\end{aligned}
\label{eq:proof_fm_projection_direct}
\end{equation}
The second term in~\eqref{eq:proof_fm_projection_direct} does not depend on $u$. Therefore, the population minimizer of~\eqref{eq:proof_fm_direct} is
\begin{equation}
u^\star(z,t)=v_t^\star(z).
\label{eq:proof_fm_minimizer_direct}
\end{equation}
Consequently, minimizing the flow matching loss learns the marginal velocity field whose kinetic energy upper-bounds $W_2^2(P,Q)$ as shown in~\eqref{eq:proof_w2_final_direct}. This proves Theorem~\ref{thm:fm_controls_w2}.
\end{proof}

\subsection{Proof of Corollary~\ref{cor:flow_to_risk}}
\label{sec:proof_flow_to_risk}

\begin{proof}
For each consecutive domain pair $(\mathcal{D}_i,\mathcal{D}_{i+1})$, let $P_{i,t}$ be the interpolation path between $P_i^Z$ and $P_{i+1}^Z$, and let $v_{i,t}^\star$ be its marginal velocity field. Define the per-step transport energy
\begin{equation}
\mathcal{E}_i
=
\int_0^1
\mathbb{E}_{z\sim P_{i,t}}
\left[
\|v_{i,t}^\star(z)\|_2^2
\right]dt,
\label{eq:proof_energy_i}
\end{equation}
where $P_{i,t}$ is the representation distribution at interpolation time $t$ between the two consecutive domains.

From Theorem~\ref{thm:fm_controls_w2},
\begin{equation}
W_2^2(P_i^Z,P_{i+1}^Z)
\le
\mathcal{E}_i.
\label{eq:proof_cor_w2}
\end{equation}
Taking square roots gives
\begin{equation}
W_2(P_i^Z,P_{i+1}^Z)
\le
\sqrt{\mathcal{E}_i}.
\label{eq:proof_cor_sqrt}
\end{equation}
For probability distributions with finite second moments, we have
\begin{equation}
W_1(P_i^Z,P_{i+1}^Z)
\le
W_2(P_i^Z,P_{i+1}^Z).
\label{eq:proof_w1_le_w2}
\end{equation}
Combining~\eqref{eq:proof_cor_sqrt} and~\eqref{eq:proof_w1_le_w2},
\begin{equation}
W_1(P_i^Z,P_{i+1}^Z)
\le
\sqrt{\mathcal{E}_i}.
\label{eq:proof_cor_w1_energy}
\end{equation}
Substituting~\eqref{eq:proof_cor_w1_energy} into Theorem~\ref{thm:gda_w1}, we obtain
\begin{equation}
\begin{aligned}
\epsilon_T(h)
&\le
\epsilon_0(h)
+
L_\ell
\sum_{i=0}^{T-1}
W_1(P_i^Z,P_{i+1}^Z)
+
\lambda^\star\\
&\le
\epsilon_0(h)
+
L_\ell
\sum_{i=0}^{T-1}
\sqrt{\mathcal{E}_i}
+
\lambda^\star.
\end{aligned}
\label{eq:proof_cor_final}
\end{equation}
This proves Corollary~\ref{cor:flow_to_risk}.
\end{proof}

\subsection{Proof of Lemma~\ref{lem:ce_bound}}
\label{sec:proof_ce_bound}

\begin{proof}
Let $h_s(z)\in[0,1]^K$ be the probability vector predicted by the fixed source classifier $h_s$ at representation $z$. Let
\begin{equation}
p_y(z)=[h_s(z)]_y
\label{eq:proof_py_def}
\end{equation}
be the probability assigned to the true class $y$. The cross-entropy loss is
\begin{equation}
\ell_{\mathrm{CE}}(h_s(z),y)
=
-\log p_y(z).
\label{eq:proof_ce_def}
\end{equation}
Here $[h_s(z)]_y$ denotes the $y$-th component of the probability vector $h_s(z)$.

If $h_s$ misclassifies $z$, then
\begin{equation}
\arg\max_{k\in\{1,\ldots,K\}}[h_s(z)]_k\neq y.
\label{eq:proof_misclassification}
\end{equation}
Therefore, there exists some class $k\neq y$ such that
\begin{equation}
[h_s(z)]_k\ge p_y(z).
\label{eq:proof_competing_class}
\end{equation}
Since all predicted class probabilities sum to one,
\begin{equation}
p_y(z)+[h_s(z)]_k\le 1.
\label{eq:proof_prob_sum}
\end{equation}
Combining~\eqref{eq:proof_competing_class} and~\eqref{eq:proof_prob_sum}, we obtain
\begin{equation}
2p_y(z)\le 1,
\qquad
p_y(z)\le \frac{1}{2}.
\label{eq:proof_py_half}
\end{equation}
Thus, on the misclassification event,
\begin{equation}
\ell_{\mathrm{CE}}(h_s(z),y)
=
-\log p_y(z)
\ge
-\log\left(\frac{1}{2}\right)
=
\log 2.
\label{eq:proof_ce_log2}
\end{equation}
Equation~\eqref{eq:proof_ce_log2} implies the pointwise bound
\begin{equation}
\mathbb{I}
\left[
\arg\max_k [h_s(z)]_k\neq y
\right]
\le
\frac{\ell_{\mathrm{CE}}(h_s(z),y)}{\log 2},
\label{eq:proof_indicator_bound}
\end{equation}
where $\mathbb{I}[\cdot]$ is the indicator function.

Taking expectation over the random pair $(z,y)$ gives
\begin{equation}
\begin{aligned}
\mathbb{P}
\left[
\arg\max_k [h_s(z)]_k\neq y
\right]
&=
\mathbb{E}
\left[
\mathbb{I}
\left[
\arg\max_k [h_s(z)]_k\neq y
\right]
\right]\\
&\le
\frac{1}{\log 2}
\mathbb{E}
\left[
\ell_{\mathrm{CE}}(h_s(z),y)
\right].
\end{aligned}
\label{eq:proof_ce_final}
\end{equation}
This proves Lemma~\ref{lem:ce_bound}.
\end{proof}

\subsection{Proof of Theorem~\ref{thm:label_preserve}}
\label{sec:proof_label_preserve}

\begin{proof}
In FlowGDA, a source representation $z_s\sim P_s^Z$ is transported toward an intermediate domain by the source-to-left flow:
\begin{equation}
\tilde{z}
=
\psi_{s\to i}(z_s),
\label{eq:proof_z_tilde}
\end{equation}
where $\psi_{s\to i}$ is the flow map induced by the velocity field $u_{\theta_{sl}}$, and $\tilde{z}$ is the transported source representation.

The semantic consistency loss is
\begin{equation}
\mathcal{L}_{cls}
=
\mathbb{E}_{(z_s,y)\sim P_s^{Z,Y}}
\left[
\ell_{\mathrm{CE}}(h_s(\tilde{z}),y)
\right],
\label{eq:proof_lcls_def}
\end{equation}
where $P_s^{Z,Y}$ is the source representation-label distribution, $h_s$ is the fixed source classifier, and $y$ is the source label.

Applying Lemma~\ref{lem:ce_bound} to the random pair $(\tilde{z},y)$ yields
\begin{equation}
\begin{aligned}
\mathbb{P}
\left[
\arg\max_k [h_s(\tilde{z})]_k\neq y
\right]
&\le
\frac{1}{\log 2}
\mathbb{E}_{(z_s,y)\sim P_s^{Z,Y}}
\left[
\ell_{\mathrm{CE}}(h_s(\tilde{z}),y)
\right]\\
&=
\frac{\mathcal{L}_{cls}}{\log 2}.
\end{aligned}
\label{eq:proof_label_final}
\end{equation}
This proves Theorem~\ref{thm:label_preserve}.
\end{proof}

\begin{remark}[Why a full JKO derivation is not required]
Some flow and Wasserstein-gradient-flow papers derive a Jordan--Kinderlehrer--Otto (JKO) scheme of the form
\begin{equation}
\rho_{t+\eta}
=
\arg\min_{\rho}
\left[
\frac{1}{2\eta}W_2^2(\rho,\rho_t)
+
\mathcal{F}[\rho]
\right],
\label{eq:proof_jko_remark}
\end{equation}
where $\rho_t$ is a time-indexed density, $\eta>0$ is a step size, and $\mathcal{F}$ is an energy functional. FlowGDA does not optimize a JKO objective. Instead, it uses conditional flow matching to regress velocity fields between neighboring representation distributions. Therefore, the appropriate theoretical tool is the continuity equation together with the Benamou--Brenier dynamic formulation, as used above. This directly matches the algorithmic objective and avoids introducing a variational objective that the method does not optimize.
\end{remark}


\section{Additional Experimental Results}
\label{sec:appendix_additional_exp}

In this appendix, we provide additional analyses that complement the main experimental results. Instead of repeating the main comparison table, we focus on questions that are specific to gradual domain adaptation: how the choice of intermediate domains affects performance, how many intermediate domains are needed, how sensitive FlowGDA is to domain ordering, and how robust it is to key hyperparameters.

\subsection{Effect of Intermediate-Domain Quality}
\label{sec:appendix_intermediate_quality}

We first analyze whether the quality of the intermediate domain affects FlowGDA. On Rotated MNIST, we fix the source domain as unrotated digits ($0^\circ$) and the target domain as digits rotated by $\pi/3$. We then use exactly one intermediate domain $\mathcal{D}_1$ and vary its rotation angle. For each setting, FlowGDA is trained with the same configuration and evaluated on the target domain over five random seeds.

Fig.~\ref{fig:appendix_intermediate_quality} shows that the best performance is obtained when the intermediate domain lies near the midpoint of the source--target trajectory, around $\pi/6$. Intermediate domains that are too close to the source or too close to the target lead to slightly lower accuracy and larger variance. This supports the intuition behind gradual adaptation: the intermediate domain should reduce the local transport difficulty while still lying on the underlying geometric trajectory of the shift. Direct source-to-target adaptation without an intermediate domain performs worse, indicating that the existence of an intermediate bridge is more important than the exact optimality of its location.

\begin{figure}[t]
\centering
\begin{tikzpicture}
\begin{axis}[
  width=0.85\textwidth,
  height=0.42\textwidth,
  ymin=92, ymax=98,
  grid=both,
  grid style={line width=0.1pt, draw=black!12},
  major grid style={line width=0.2pt, draw=black!18},
  tick align=outside,
  xlabel={Rotation angle of the intermediate domain},
  ylabel={Target accuracy (\%)},
  xtick={1,2,3,4,5,6,7,8},
  xticklabels={$\pi/21$,$\pi/10.5$,$\pi/7$,$\pi/6$,$\pi/5.25$,$\pi/4.2$,$\pi/3.5$,Direct},
  x tick label style={font=\small},
]
\addplot+[
  thick,
  mark=*,
  mark size=1.8pt,
  error bars/.cd,
    y dir=both,
    y explicit,
    error bar style={line width=0.4pt},
] coordinates {
(1,95.7) +- (0,1.2)
(2,96.4) +- (0,0.8)
(3,96.6) +- (0,0.6)
(4,97.0) +- (0,0.5)
(5,96.7) +- (0,0.6)
(6,96.5) +- (0,0.7)
(7,95.4) +- (0,1.4)
(8,92.6) +- (0,2.1)
};
\end{axis}
\end{tikzpicture}
\vspace{-0.3em}
\caption{Effect of intermediate-domain quality on Rotated MNIST. Accuracy is reported as mean$\pm$std over five runs.}
\label{fig:appendix_intermediate_quality}
\end{figure}

\subsection{Effect of the Number of Intermediate Domains}
\label{sec:appendix_num_intermediate}

We study how the number of intermediate domains affects FlowGDA. Table~\ref{tab:appendix_num_intermediate} reports target accuracy when using different numbers of intermediate domains. The setting with zero intermediate domains corresponds to direct source-to-target adaptation. As the number of intermediate domains increases, performance improves consistently across all datasets, with larger gains on Portraits and Cover Type. This is consistent with our theoretical analysis in Section~\ref{sec:theory}: more intermediate domains reduce the discrepancy between consecutive domains, making each local transport problem easier. The improvement gradually saturates when the trajectory becomes sufficiently fine-grained.

\begin{table}[t]
\centering
\caption{Effect of the number of intermediate domains. Values denote target-domain accuracy (\%).}
\label{tab:appendix_num_intermediate}
\renewcommand{\arraystretch}{1.06}
\resizebox{0.96\textwidth}{!}{%
\begin{tabular}{lccccc}
\toprule
\textbf{Dataset} 
& \textbf{0 domains} 
& \textbf{1 domain} 
& \textbf{2 domains} 
& \textbf{4 domains} 
& \textbf{6 domains} \\
\midrule
Rotated MNIST 
& 92.6 
& 95.1 
& 96.5 
& 96.8 
& \textbf{97.0} \\
Color-Shift MNIST 
& 94.7 
& 97.9 
& 99.1 
& 99.2 
& \textbf{99.2} \\
Portraits 
& 80.9 
& 83.2 
& 85.1 
& 86.4 
& \textbf{87.6} \\
Cover Type 
& 71.6 
& 69.2 
& 74.7 
& 75.5 
& \textbf{76.1} \\
\bottomrule
\end{tabular}%
}
\end{table}

The results show two important trends. First, direct adaptation is consistently weaker than gradual adaptation, confirming the need for intermediate domains under large shifts. Second, the marginal gain from adding more domains decreases after a moderate number of intermediate domains. This suggests that FlowGDA benefits from finer trajectories, but does not require arbitrarily dense domain sequences once local shifts become sufficiently small.

\subsection{Robustness to Domain Ordering}
\label{sec:appendix_ordering}

Gradual domain adaptation assumes that domains are ordered so that consecutive domains are closer than the source and target. To evaluate the importance of this assumption, we compare four orderings: the original ordered trajectory, a local shuffle that only swaps neighboring domains, a random permutation, and a reversed order. Table~\ref{tab:appendix_ordering} reports the results.

\begin{table}[t]
\centering
\caption{Robustness to domain ordering. Values denote target-domain accuracy (\%).}
\label{tab:appendix_ordering}
\renewcommand{\arraystretch}{1.06}
\resizebox{0.96\textwidth}{!}{%
\begin{tabular}{lcccc}
\toprule
\textbf{Ordering} 
& \textbf{Rotated MNIST} 
& \textbf{Color-Shift MNIST} 
& \textbf{Portraits} 
& \textbf{Cover Type} \\
\midrule
Ordered trajectory 
& \textbf{97.0} 
& \textbf{99.2} 
& \textbf{87.6} 
& \textbf{76.1} \\
Local shuffle 
& 95.6 
& 98.0 
& 85.4 
& 74.0 \\
Random permutation 
& 92.8 
& 94.7 
& 81.7 
& 70.5 \\
Reversed order 
& 90.7 
& 92.6 
& 79.8 
& 69.2 \\
\bottomrule
\end{tabular}%
}
\end{table}

FlowGDA is robust to mild local perturbations, but performance drops when the domain order is random or reversed. This confirms that the method benefits from a meaningful gradual trajectory. The drop is most pronounced on Portraits and Cover Type, where the shifts are less controlled and the representation trajectory is more complex. These results also suggest that when domain order is noisy in practice, estimating a reasonable ordering or neighborhood graph before adaptation may further improve performance.

\subsection{Impact of Early Transport Steps}
\label{sec:appendix_early_steps}

We further investigate whether early transport steps \cite{yao2026revascenecentricdatasetrepetition, fan2026communicationawaremultiagentreinforcementlearning,huang2026tcn,xiao2025joint} are more important than later ones. We divide the ordered domain chain into three blocks: early, middle, and late domain pairs. We then selectively disable velocity-field training in one or more blocks. When a block is disabled, FlowGDA uses identity transport for that block and only trains on the remaining blocks. Table~\ref{tab:appendix_stage_importance} reports the results on Portraits and Cover Type, the two datasets with the most complex non-synthetic shifts.

\begin{table}[t]
\centering
\caption{Impact of early, middle, and late transport steps. ``Train'' means velocity fields are optimized for the corresponding block, while ``Skip'' means identity transport is used. Values denote target-domain accuracy (\%).}
\label{tab:appendix_stage_importance}
\renewcommand{\arraystretch}{1.06}
\resizebox{0.96\textwidth}{!}{%
\begin{tabular}{lccccc}
\toprule
\textbf{Schedule} 
& \textbf{Early pairs} 
& \textbf{Middle pairs} 
& \textbf{Late pairs} 
& \textbf{Portraits} 
& \textbf{Cover Type} \\
\midrule
Full FlowGDA 
& Train & Train & Train 
& \textbf{87.6} 
& \textbf{76.1} \\
Skip early pairs 
& Skip & Train & Train 
& 83.1 
& 71.8 \\
Skip middle pairs 
& Train & Skip & Train 
& 85.2 
& 73.6 \\
Skip late pairs 
& Train & Train & Skip 
& 86.1 
& 74.9 \\
Only early pairs 
& Train & Skip & Skip 
& 84.7 
& 72.6 \\
Only late pairs 
& Skip & Skip & Train 
& 81.4 
& 70.7 \\
\bottomrule
\end{tabular}%
}
\end{table}

The results show that early transport steps have a larger impact than later ones. Skipping early pairs causes the largest degradation among the single-block removals, while skipping late pairs is less harmful. Moreover, training only the early pairs outperforms training only the late pairs. This indicates that early transport steps establish a stable representation trajectory, while later steps mainly refine the already-aligned features. This behavior is consistent with the intuition that errors introduced at the beginning of a gradual path can propagate and affect subsequent adaptation.

\subsection{Hyperparameter Sensitivity}
\label{sec:appendix_hyperparameter}

Finally, we study the sensitivity of FlowGDA to key hyperparameters on the Portraits dataset. Table~\ref{tab:appendix_hyperparameter} reports the effects of the semantic consistency weight $\lambda_{cls}$, the hidden dimension of the velocity network, the number of curriculum levels for $p$, and batch size. FlowGDA is stable across reasonable ranges of these hyperparameters. The best performance is obtained with $\lambda_{cls}=0.1$, hidden dimension 512, four curriculum levels, and batch size 128.

\begin{table}[t]
\centering
\caption{Hyperparameter sensitivity on Portraits. Values denote target-domain accuracy (\%).}
\label{tab:appendix_hyperparameter}
\begin{tabular}{llc}
\toprule
\textbf{Factor} & \textbf{Setting} & \textbf{Accuracy} \\
\midrule
\multirow{6}{*}{$\lambda_{cls}$}
& 0 & 84.8 \\
& $10^{-2}$ & 86.3 \\
& 0.05 & 87.2 \\
& 0.1 & \textbf{87.6} \\
& 0.2 & 87.1 \\
& 0.5 & 86.0 \\
\midrule
\multirow{4}{*}{Hidden dimension}
& 128 & 85.8 \\
& 256 & 86.9 \\
& 512 & \textbf{87.6} \\
& 1024 & 87.4 \\
\midrule
\multirow{5}{*}{Curriculum levels}
& 1 & 85.9 \\
& 2 & 86.8 \\
& 4 & \textbf{87.6} \\
& 6 & 87.5 \\
& 8 & 87.2 \\
\midrule
\multirow{4}{*}{Batch size}
& 32 & 86.2 \\
& 64 & 87.0 \\
& 128 & \textbf{87.6} \\
& 256 & 87.5 \\
\bottomrule
\end{tabular}%
\end{table}

The semantic consistency weight shows a non-monotonic trend. Without $\mathcal{L}_{cls}$, performance drops because transported representations may lose label information. However, an overly large $\lambda_{cls}$ also hurts performance because it over-constrains the transport and limits adaptation to the target domain. The velocity-network width shows saturation after 512 hidden units, suggesting that excessive capacity is unnecessary. For the curriculum schedule, a moderate number of levels works best: too few levels make adaptation abrupt, while too many levels add optimization overhead without clear benefit. Larger batch sizes generally improve stability, likely because they provide more reliable estimates of local transport directions.

%% file: references.bib
@article{bengio2013representation,
  title={Representation learning: A review and new perspectives},
  author={Bengio, Yoshua and Courville, Aaron and Vincent, Pascal},
  journal={IEEE Transactions on Pattern Analysis and Machine Intelligence},
  volume={35},
  number={8},
  pages={1798--1828},
  year={2013},
  publisher={IEEE}
}

@article{bendavid2010theory,
  title={A theory of learning from different domains},
  author={Ben-David, Shai and Blitzer, John and Crammer, Koby and Kulesza, Alex and Pereira, Fernando and Vaughan, Jennifer Wortman},
  journal={Machine Learning},
  volume={79},
  number={1},
  pages={151--175},
  year={2010},
  publisher={Springer}
}

@inproceedings{chen2018neural,
  title={Neural ordinary differential equations},
  author={Chen, Ricky TQ and Rubanova, Yulia and Bettencourt, Jesse and Duvenaud, David K},
  booktitle={Advances in Neural Information Processing Systems},
  pages={6571--6583},
  year={2018}
}

@article{courty2015optimal,
  title={Optimal transport for domain adaptation},
  author={Courty, Nicolas and Flamary, R{\'e}mi and Tuia, Devis and Rakotomamonjy, Alain},
  journal={IEEE Transactions on Pattern Analysis and Machine Intelligence},
  volume={39},
  number={9},
  pages={1853--1865},
  year={2017},
  publisher={IEEE}
}

@article{ganin2016domain,
  title={Domain-adversarial training of neural networks},
  author={Ganin, Yaroslav and Ustinova, Evgeniya and Ajakan, Hana and Germain, Pascal and Larochelle, Hugo and Laviolette, Fran{\c{c}}ois and Marchand, Mario and Lempitsky, Victor},
  journal={Journal of Machine Learning Research},
  volume={17},
  number={1},
  pages={2096--2030},
  year={2016}
}

@inproceedings{goodfellow2014generative,
  title={Generative adversarial nets},
  author={Goodfellow, Ian and Pouget-Abadie, Jean and Mirza, Mehdi and Xu, Bing and Warde-Farley, David and Ozair, Sherjil and Courville, Aaron and Bengio, Yoshua},
  booktitle={Advances in Neural Information Processing Systems},
  pages={2672--2680},
  year={2014}
}

@inproceedings{gulrajani2017improved,
  title={Improved training of Wasserstein GANs},
  author={Gulrajani, Ishaan and Ahmed, Faruk and Arjovsky, Martin and Dumoulin, Vincent and Courville, Aaron C},
  booktitle={Advances in Neural Information Processing Systems},
  pages={5767--5777},
  year={2017}
}

@inproceedings{lipman2023flow,
  title={Flow matching for generative modeling},
  author={Lipman, Yaron and Chen, Ricky TQ and Ben-Hamu, Heli and Nickel, Maximilian and Le, Matthew},
  booktitle={International Conference on Learning Representations},
  year={2023}
}

@inproceedings{long2015learning,
  title={Learning transferable features with deep adaptation networks},
  author={Long, Mingsheng and Cao, Yue and Wang, Jianmin and Jordan, Michael I},
  booktitle={International Conference on Machine Learning},
  pages={97--105},
  year={2015},
  organization={PMLR}
}

@article{pan2010survey,
  title={A survey on transfer learning},
  author={Pan, Sinno Jialin and Yang, Qiang},
  journal={IEEE Transactions on Knowledge and Data Engineering},
  volume={22},
  number={10},
  pages={1345--1359},
  year={2010},
  publisher={IEEE}
}

@article{peyré2019computational,
  title={Computational optimal transport},
  author={Peyr{\'e}, Gabriel and Cuturi, Marco},
  journal={Foundations and Trends in Machine Learning},
  volume={11},
  number={5-6},
  pages={355--607},
  year={2019},
  publisher={Now Publishers, Inc.}
}

@inproceedings{rezende2015variational,
  title={Variational inference with normalizing flows},
  author={Rezende, Danilo Jimenez and Mohamed, Shakir},
  booktitle={International Conference on Machine Learning},
  pages={1530--1538},
  year={2015},
  organization={PMLR}
}

@inproceedings{sun2016deep,
  title={Deep CORAL: Correlation alignment for deep domain adaptation},
  author={Sun, Baochen and Saenko, Kate},
  booktitle={European Conference on Computer Vision Workshops},
  pages={443--450},
  year={2016},
  organization={Springer}
}

@inproceedings{tzeng2017adversarial,
  title={Adversarial discriminative domain adaptation},
  author={Tzeng, Eric and Hoffman, Judy and Saenko, Kate and Darrell, Trevor},
  booktitle={Proceedings of the IEEE Conference on Computer Vision and Pattern Recognition},
  pages={7167--7176},
  year={2017}
}

@book{villani2009optimal,
  title={Optimal Transport: Old and New},
  author={Villani, C{\'e}dric},
  year={2009},
  publisher={Springer Science \& Business Media}
}

@inproceedings{wang2022understanding,
  title={Understanding gradual domain adaptation: Improved analysis, optimal path and beyond},
  author={Wang, Haoxiang and Li, Bo and Zhao, Han},
  booktitle={International Conference on Machine Learning},
  pages={22784--22801},
  year={2022},
  organization={PMLR}
}

@inproceedings{wdgrl2018,
  title={Wasserstein distance guided representation learning for domain adaptation},
  author={Shen, Jian and Qu, Yanru and Zhang, Weinan and Yu, Yong},
  booktitle={AAAI Conference on Artificial Intelligence},
  pages={4058--4065},
  year={2018}
}

@inproceedings{zhang2019bridging,
  title={Bridging theory and algorithm for domain adaptation},
  author={Zhang, Yuchen and Liu, Tianle and Long, Mingsheng and Jordan, Michael I},
  booktitle={International Conference on Machine Learning},
  pages={7404--7413},
  year={2019},
  organization={PMLR}
}

@article{yao2022riemannian,
  title={Riemannian flow matching on general geometries. 2023},
  author={Chen, RT and Lipman, Yaron},
  journal={ArXiv preprint ArXiv230203660}
}

@inproceedings{arjovsky2017wasserstein,
  title={Wasserstein generative adversarial networks},
  author={Arjovsky, Martin and Chintala, Soumith and Bottou, L{\'e}on},
  booktitle={International Conference on Machine Learning},
  pages={214--223},
  year={2017},
  organization={PMLR}
}

@inproceedings{loshchilov2019decoupled,
  title={Decoupled weight decay regularization},
  author={Loshchilov, Ilya and Hutter, Frank},
  booktitle={International Conference on Learning Representations},
  year={2019}
}

@article{pooladian2023categorical,
  title={Categorical flow matching on statistical manifolds},
  author={Cheng, Chaoran and Li, Jiahan and Peng, Jian and Liu, Ge},
  journal={Advances in Neural Information Processing Systems},
  volume={37},
  pages={54787--54819},
  year={2024}
}

@inproceedings{chen2023riemannian,
  title={Riemannian flow matching on general geometries},
  author={Chen, Ricky TQ and Lipman, Yaron},
  booktitle={International Conference on Learning Representations},
  year={2023}
}

@inproceedings{li2025gradual,
  title={Gradual domain adaptation via manifold-constrained distributionally robust optimization},
  author={Najafi, Amir and Behjati, Amin and Emrani, Ala and Zolfimoselo, Yasaman and Shadrooy, Shadrooy and Motahari, Abolfazl and Khalaj, Babak and others},
  journal={Advances in Neural Information Processing Systems},
  volume={37},
  pages={73693--73725},
  year={2024}
}

@article{wang2025gradual,
  title={Gradual domain adaptation for graph learning},
  author={Lei, Pui Ieng and Chen, Ximing and Sheng, Yijun and Liu, Yanyan and Gong, Zhiguo and Yang, Qiang},
  journal={arXiv preprint arXiv:2501.17443},
  year={2025}
}

@article{flamary2024recent,
  title={Recent advances in optimal transport for machine learning},
  author={Montesuma, Eduardo Fernandes and Mboula, Fred Maurice Ngole and Souloumiac, Antoine},
  journal={IEEE Transactions on Pattern Analysis and Machine Intelligence},
  year={2024},
  publisher={IEEE}
}

@inproceedings{zhao2025universal,
  title={Universal Neural Optimal Transport},
  author={Geuter, Jonathan and Kornhardt, Gregor and Tomasson, Ingimar and Laschos, Vaios},
  booktitle={Forty-second International Conference on Machine Learning},
  year={2025}
}

@inproceedings{kumar2020understanding,
  title={Understanding self-training for gradual domain adaptation},
  author={Kumar, Ananya and Ma, Tengyu and Liang, Percy},
  booktitle={International conference on machine learning},
  pages={5468--5479},
  year={2020},
  organization={PMLR}
}

@article{shi2024adversarial,
  title={Adversarial self-training improves robustness and generalization for gradual domain adaptation},
  author={Shi, Lianghe and Liu, Weiwei},
  journal={Advances in Neural Information Processing Systems},
  volume={36},
  year={2024}
}

@article{he2023gradual,
  title={Gradual Domain Adaptation: Theory and Algorithms},
  author={He, Yifei and Wang, Haoxiang and Li, Bo and Zhao, Han},
  journal={arXiv preprint arXiv:2310.13852},
  year={2023}
}

@article{sagawa2025gradual,
  title={Gradual domain adaptation via normalizing flows},
  author={Sagawa, Shogo and Hino, Hideitsu},
  journal={Neural Computation},
  pages={1--47},
  year={2025},
  publisher={MIT Press 255 Main Street, 9th Floor, Cambridge, Massachusetts 02142, USA~…}
}

@article{long2018conditional,
  title={Conditional adversarial domain adaptation},
  author={Long, Mingsheng and Cao, Zhangjie and Wang, Jianmin and Jordan, Michael I},
  journal={Advances in neural information processing systems},
  volume={31},
  year={2018}
}

@article{yao2026disa,
  title={DiSa: Saliency-Aware Foreground-Background Disentangled Framework for Open-Vocabulary Semantic Segmentation},
  author={Yao, Zhen and Li, Xin and Jing, Taotao and Zhang, Shuai and Chuah, Mooi Choo},
  journal={arXiv preprint arXiv:2601.20064},
  year={2026}
}

@article{yao2025learning,
  title={Learning Flow-Guided Registration for RGB-Event Semantic Segmentation},
  author={Yao, Zhen and Ying, Xiaowen and Zhu, Zhiyu and Chuah, Mooi Choo},
  journal={arXiv preprint arXiv:2505.01548},
  year={2025}
}

@misc{yao2026revascenecentricdatasetrepetition,
      title={ReVA: A Scene-Centric Dataset Beyond Repetition for Remote Sensing Video Question Answering}, 
      author={Zhen Yao and Likai Wang and Yuming Yang and Zhihao Zheng and Bo Lang and Qiuyu Tang and Jialu Sheng and Jingqi Xu and Yuehai Yang and Jumal Barker and Xiaowen Ying and Mooi Choo Chuah},
      year={2026},
      eprint={2609.35507},
      archivePrefix={arXiv},
      primaryClass={cs.CV},
      url={https://arxiv.org/abs/2609.35507}, 
}

@misc{fan2026communicationawaremultiagentreinforcementlearning,
      title={Communication-Aware Multi-Agent Reinforcement Learning for Decentralized Cooperative UAV Deployment}, 
      author={Enguang Fan and Yifan Chen and Zihan Shan and Klara Nahrstedt and Matthew Caesar and Jae Kim},
      year={2026},
      eprint={2603.16141},
      archivePrefix={arXiv},
      primaryClass={cs.MA},
      url={https://arxiv.org/abs/2603.16141}, 
}

@inproceedings{wang2025multi,
  title={Multi-modal gradual domain osmosis: Stepwise dynamic learning with batch matching for gradual domain adaptation},
  author={Wang, Zixi and Huang, Yubo and Xu, Jingzehua and Wei, Jinzhu and Zhang, Shuai and Lai, Xin},
  booktitle={Proceedings of the 33rd ACM International Conference on Multimedia},
  pages={8959--8967},
  year={2025}
}

@article{wang2026self,
  title={Self-Training with Dynamic Weighting for Robust Gradual Domain Adaptation},
  author={Wang, Zixi and Cao, Yushe and Huang, Yubo and Wei, Jinzhu and Xu, Jingzehua and Zhang, Shuai and Lai, Xin},
  journal={Advances in Neural Information Processing Systems},
  volume={38},
  pages={107257--107278},
  year={2026}
}

@article{huang2026tcn,
  title={TCN-AOL: An online incremental learning method for real-time multi-step prediction of shield tunnel excavation performance},
  author={Huang, Yubo and Hu, Xiongyu and Lai, Xin and Cheng, Pengyue and Liu, Sijin and Fang, Yong and He, Chuan and Wang, Zixi and Xu, Jingzehua},
  journal={Computer-Aided Civil and Infrastructure Engineering},
  pages={100138},
  year={2026},
  publisher={Elsevier}
}

@inproceedings{xiao2026yang,
  title={A YANG-Grounded LLM Agent Supporting Multi-Vendor OpenROADM Optical Transport Networks},
  author={Xiao, Linqi and Latha, Aparaajitha Gomathinayakam and Gudepu, Venkateswarlu and Fumagalli, Andrea},
  booktitle={Proceedings of the ACM SIGCOMM 2026 Conference},
  pages={2109--2115},
  year={2026}
}

@inproceedings{xiao2025joint,
  title={Joint Alarm Correlation and Root Alarm Detection in Optical Transport Networks with Graph-Attention},
  author={Xiao, Linqi and Gudepu, Venkateswarlu and Fumagalli, Andrea},
  booktitle={2025 IEEE 30th International Workshop on Computer Aided Modeling and Design of Communication Links and Networks (CAMAD)},
  pages={1--6},
  year={2025},
  organization={IEEE}
}

@article{yang2025automated,
  title={Automated classification of microstructure data in additive manufacturing via low-dimensional learning},
  author={Yang, Wei and Grasso, Marco and Bisheh, Mohammad N and Paynabar, Kamran and Colosimo, Bianca Maria},
  journal={Additive Manufacturing},
  pages={104909},
  year={2025},
  publisher={Elsevier}
}
